\DeclareMathOperator*{\argmax}{arg\,max}
\DeclareMathOperator*{\argmin}{arg\,min}
\DeclareMathOperator{\EX}{\mathbb{E}}%
\newcommand{\citeN}[1]{\citeauthor{#1}~(\citeyear{#1})}
\pgfplotsset{compat=1.15}
\tikzstyle{activity}=[rectangle, draw=black, rounded corners, text centered, text width=7em, fill=white, drop shadow]
\tikzstyle{data}=[rectangle, draw=black, text centered, fill=black!10, text width=6em, drop shadow]
\tikzstyle{myarrow}=[->, thick]
\definecolor{Gray}{gray}{0.9}
\newcommand{\techreport}[0]{\citeN{speck-et-al-arxiv2020}}
\let\oldcite=\cite
\renewcommand\cite[1]{\ifthenelse{\equal{#1}{NEEDED}}{[{\color{blue}citation~needed}]}{\oldcite{#1}}}
\newcommand{\ctrlpol}[0]{\tilde{\pi}}
\newcommand{\ctrlpolspace}[0]{\tilde{\Pi}}
\newcommand{\insts}[0]{\ensuremath \mathcal{I}}
\newcommand{\instance}[0]{\mathit{i}}  
\newcommand{\autocase}[1]{\ifnum\ifhmode\spacefactor\else2000\fi>1000 \uppercase{#1}\else#1\fi}
\newcommand{\agent}[1][]{\autocase{c}ontroller#1\xspace} 
\newcommand{\target}[1][]{\autocase{a}lgorithm#1\xspace}
\newcommand{\task}[1][]{\autocase{i}nstance#1\xspace}
\newcommand{\ctrl}[0]{\autocase{d}ynamic algorithm configuration\xspace}
\newcommand{\CTRL}[0]{DAC}
\newcommand{\ptask}[0]{\ensuremath i}
\newcommand{\fulltask}{\ensuremath {\ptask = \langle \vars, \init, \operators, \goal \rangle}\xspace}
\newcommand{\vars}{\ensuremath {\mathcal V}\xspace}
\newcommand{\domain}{\ensuremath {D}\xspace}
\newcommand{\operators}{\ensuremath {\mathcal O}\xspace}
\newcommand{\init}{\ensuremath {s_0}\xspace}
\newcommand{\goal}{\ensuremath {s_\star}\xspace}
\newcommand{\pre}{\ensuremath {\mathit{pre}}\xspace}
\newcommand{\eff}{\ensuremath {\mathit{eff}}\xspace}
\newcommand{\states}{\ensuremath {\mathcal S}\xspace}
\newcommand{\plan}[0]{\ensuremath \pi}
\newcommand{\heus}[0]{\ensuremath H}
\newcommand{\gbf}{greedy best-first search}
\theoremstyle{plain}
\newtheorem{theorem}{Theorem}
\newtheorem{proposition}[theorem]{Proposition}
\newtheorem{corollary}[theorem]{Corollary}
\theoremstyle{definition}
\theoremstyle{remark}
\newcommand{\name}[1]{\textsc{#1}}
\newcommand*{\QED}{\hfill\ensuremath{\square}}
\title{Learning Heuristic Selection with Dynamic Algorithm
Configuration}
\author {
    David Speck\textsuperscript{\rm 1,$*$},
    Andr{\'e} Biedenkapp\textsuperscript{\rm 1,$*$},
    Frank Hutter\textsuperscript{\rm 1,\rm 2},
    Robert Mattm{\"u}ller\textsuperscript{\rm 1},
    Marius Lindauer\textsuperscript{\rm 3}\\
}
\begin{document}

\maketitle

\begin{abstract}
A key challenge in satisficing planning is to use multiple heuristics within one heuristic search. 
An aggregation of multiple heuristic estimates, for example by taking the maximum, has the disadvantage that bad estimates of a single heuristic can negatively affect the whole search.
Since the performance of a heuristic varies from instance to instance, approaches such as algorithm selection can be successfully applied. 
In addition, alternating between multiple heuristics during the search makes it possible to use all heuristics equally and improve performance.
However, all these approaches ignore the internal search dynamics of a planning system, which can help to select the most useful heuristics for the current expansion step.
We show that dynamic algorithm configuration can be used for dynamic heuristic selection which takes into account the internal search dynamics of a planning system. 
Furthermore, we prove that this approach generalizes over existing approaches and that it can exponentially improve the performance of the heuristic search. 
To learn dynamic heuristic selection, we propose an approach based on reinforcement learning and show empirically that domain-wise learned policies, which take the internal search dynamics of a planning system into account, can exceed existing approaches.
\end{abstract}

\section{Introduction}
Heuristic forward search is one of the most popular and successful techniques in classical planning. 
Although there is a large number of heuristics, it is known that the performance, i.e., the informativeness, of a heuristic varies from instance to instance \cite{wolpert-macready-tr1995}. 
While in optimal planning it is easy to combine multiple admissible heuristic estimates using the maximum, in satisficing planning the estimates of inadmissible heuristics are difficult to combine in general \cite{roeger-helmert-icaps2010}. 
The reason for this is that highly inaccurate and uninformative estimates of a heuristic can have a negative effect on the entire search process when aggregating all estimates. 
Therefore, an important task in satisficing planning is to utilize multiple heuristics within one heuristic search. 

\citeN{roeger-helmert-icaps2010} showed the promise of searching with multiple heuristics, maintaining a set of heuristics, each associated with a separate open list to allow switching between such heuristics. 
This bypasses the problem of aggregating different heuristic estimates, while the proposed alternating procedure uses each heuristic to the same extent. 
Another direction is the selection of the best algorithm a priori based on the characteristics of the present planning instance 
\cite{cenamor-et-al-jair2016,sievers-et-al-aaai2019}.
In other words, different search algorithms and heuristics are part of a portfolio from which one is selected to solve a particular problem instance. 
This automated process is referred to as algorithm selection~\cite{rice-aic1976} while optimization of algorithm parameters is referred to as algorithm configuration \cite{hutter-et-al-jair2009}. 
Both methodologies have been successfully applied to planning  \cite{fawcett-et-al-ipc2011a,fawcett-et-al-icaps2014,seipp-et-al-aaai2015,sievers-et-al-aaai2019} and various other areas of artificial intelligence, such as machine learning \cite{snoek-et-al-nips2012} or satisfiability solving \cite{hutter-et-al-aij2017}.
However, algorithm selection and configuration ignore the non-stationarity of which configuration performs well. In order to remedy this,  
\citeN{biedenkapp-et-al-ecai2020} showed that the problem of selecting and adjusting configurations during the search based on the current solver state and search dynamics can be modelled as a contextual Markov decision process and addressed by standard reinforcement learning methods.

In planning, there is little work that takes into account the search dynamics of a planner to decide which planner to use. \citeN{cook-huber-smc2016} showed that switching between different heuristic searches (planners) based on the search dynamics obtained during a search leads to better performance than a static selection of a heuristic. 
However, in this approach, several disjoint searches (planners) are executed, which do not share the search progress  \cite{aine-likhackev-icaps2016}. 
\citeN{ma-et-al-aaai2020} showed that a portfolio-based approach that can switch the planner at halftime, depending on the performance of the previously selected one, can improve performance over a simple algorithm selection at the beginning.
Recent works have investigated switching between different search strategies depending on the internal search dynamics of a planner \cite{gomoluch-etal-icaps2019,gomoluch-etal-icaps2020}.
One approach that shares the search progress is to maintain multiple heuristics as separate open lists \cite{roeger-helmert-icaps2010}. 
Furthermore, it has been shown that boosting, i.e., giving preference to heuristics that have recently made progress, can improve search performance \cite{richter-helmert-icaps2009}. 
While in these works heuristic values are computed for each state, \citeN{domshlak-et-al-aaai2010} investigated the question, whether the time spent for the computation of the heuristic value for a certain state pays off.

Another avenue of work considers how to ``directly'' create or learn new heuristic functions.
One example is the work of \citeN{ferber-et-al-ecai2020}, which utilizes supervised learning to learn a heuristic function where the input is the planning (world) state itself.
\citeN{thayer-et-al-icaps2011} showed that admissible heuristics can be transformed online, into inadmissible heuristics, which makes it possible to tailor a heuristic to a specific planning instance.

In this work, we introduce and define dynamic algorithm configuration \cite{biedenkapp-et-al-ecai2020} for planning, by learning a policy that dynamically selects a heuristic within a search with multiple open lists \cite{roeger-helmert-icaps2010} based on the current search dynamics.
We prove that a dynamic adjustment of heuristic selection during the search can exponentially improve the search performance of a heuristic search compared to a static heuristic selection or a \emph{non-adaptive} policy like alternating.
Furthermore, we show that such a dynamic control policy is a strict generalization of other already existing approaches to heuristic selection.
We also propose a set of state features describing the current search dynamics and a reward function for training a reinforcement learning agent.
Finally, an empirical evaluation shows that it is possible to learn a dynamic control policy on a \textit{per-domain basis} that outperforms approaches that do not involve search dynamics, such as ordinary heuristic search with a single heuristic and alternating between heuristics.

\section{Background}
We first introduce classical planning, then discuss greedy best-first search with multiple heuristics, and finally present the concept of dynamic algorithm configuration based on reinforcement learning. Note that the terminology and notation of planning and reinforcement learning are similar, so we use the symbol $\sim$ for all notations directly related to reinforcement learning; e.g. $\plan$ denotes a plan of a planning task, while $\tilde{\pi}$ is a policy obtained by reinforcement learning. 

\subsection{Classical Planning}
A problem instance or task in classical planning, modeled in the \name{sas}$^+$ formalism \cite{backstrom-nebel-compint1995}, is a tuple \fulltask{} consisting of four components. \vars{} is a finite set of state variables, each associated with a finite domain $\domain_v$. A fact is a pair $(v, d)$, where $v \in \vars$ and $d \in \domain_v$, and a partial variable assignment over $\vars$ is a consistent set of facts, i.e., a set that does not contain two facts for the same variable. If $s$ assigns a value to each $v \in \vars$, $s$ is called a state. States and partial variable assignments are functions which map variables to values, i.e., $s(v)$ is the value
of variable $v$ in state $s$ (analogous for partial variable assignments).  
\operators{} is a set of operators, where an operator is a pair $o = \langle \pre_o, \eff_o \rangle$ of partial variable assignments called preconditions and effects, respectively. Each operator has cost $c_o \in \mathbb{N}_0$.
The state \init{} is called the initial state and the partial variable assignment \goal{} specifies the goal condition, which defines all possible goal states $S_{\star}$. With \states{} we refer to the set of all states defined over \vars{}, and with $|\ptask{}|$ we refer to the size of the planning task $\ptask{}$, i.e., the number of operators and facts.

We call an operator $o \in \operators{}$ applicable in state $s$ iff $\pre_o$ is satisfied in $s$, i.e., $s \models \pre_o$. Applying operator $o$ in state $s$ results in a state $s'$ where $s'(v) = \eff_o(v)$ for all variables $v \in \vars$ for which $\eff_o$ is defined and $s'(v) = s(v)$ for all other variables. We also write $s[o]$ for $s'$. The objective of classical planning is to determine a plan, which is defined as follows.
	A \emph{plan} $\pi = \langle o_0, \dots, o_{n-1} \rangle$ for planning task
	$\ptask$ is a sequence of applicable operators which generates a sequence of
	states $s_0, \dots, s_n$, where $s_n \in S_\star$ is a goal state and $s_{i+1} = 
	s_i[o_i]$ for all $i = 0, \dots, n-1$. The cost of plan $\pi$ is the sum of its operator costs.

Given a planning task, the search for a good plan is called satisficing planning. In practice, heuristic search algorithms such as greedy best-first search 
have proven to be one of the dominant search strategies for satisficing planning. 

\subsection{Greedy Search with Multiple Heuristics}
Greedy best-first search is a pure heuristic search which tries to estimate the distance to a goal state by means of a heuristic function. 
A heuristic is a function $h: \states \mapsto \mathbb{N}_{0} \cup \{\infty\}$, which estimates the cost to reach a goal state from a state $s \in \states$. 
The perfect heuristic $h^\star$ maps each state $s$ to the cost of the cheapest path from $s$ to any goal state $s_\star \in S_\star$.
The idea of greedy best-first search with a single heuristic $h$ is to start with the initial state and to expand the most promising states based on $h$ until a goal state is found \cite{pearl-1984}. 
During the search, relevant states are stored in an open list that is sorted by the heuristic values of the contained states in ascending order so that the state with the lowest heuristic values, i.e., the most promising state, is at the top.
More precisely, in each step a state $s$ with minimal heuristic value is expanded, i.e., its successors $S' = \{s[o] ~|~ o \in O, s \models \pre_o\}$ are generated and states $s' \in S'$ not already expanded are added to the open list according to their heuristic values $h(s')$. Within an open list, for states with the same heuristic value ($h$-value) the tie-breaking rule that is used is according to the first-in-first-out principle.

In satisficing planning it is possible to combine multiple heuristic values for the same state in arbitrary ways. 
It has been shown, however, that the combination of several heuristic values into one, e.g. by taking the maximum or a (weighted) sum, does not lead to informative heuristic estimates \cite{roeger-helmert-icaps2010}. 
This can be explained by the fact that if one or more heuristics provide very inaccurate values, the whole expansion process is affected.
\citeN{helmert-jair2006} introduced the idea
to maintain multiple heuristics $\heus = \{h_0,\dots,h_{n-1}\}$ within one greedy best-first search. 
More precisely, it is possible to maintain a separate open list for each heuristic $h \in \heus$ and switch between them at each expansion step while always expanding the most promising state of the currently selected open list. 
The generated successor states are then evaluated with \emph{each} heuristic and added to the corresponding open lists. 
This makes it possible to share the search progress \cite{aine-likhackev-icaps2016}. 
Especially, an alternation policy, in which all heuristics are selected one after the other in a cycle such that all heuristics are treated and used equally, has proven to be an efficient method~\cite{roeger-helmert-icaps2010}.
Such equal use of heuristics can help to progress the search space towards a goal state, even if only one heuristic is informative.
However, in some cases it is possible to infer that some heuristics are currently, i.e., in the current region of the search space, more informative than others, which is ignored by a strategy like alternation. 
More precisely, with alternation, the choice of the heuristic depends only on the current time step and not on the current search dynamics or planner state.
In general, it is possible to dynamically select a heuristic based on internal information provided by the planner.
This is the key idea behind our approach described in the following.

\subsection{Dynamic Algorithm Configuration}
Automated algorithm configuration (AC) has proven a powerful approach to leveraging the full potential of algorithms.
Standard AC views the algorithms being optimized as black boxes, thereby ignoring an algorithm's temporal behaviour
and ignoring that an optimal configuration might be non-stationary~\cite{arfaee-et-al-aij2011}.
\ctrl (\CTRL) is a new meta-algorithmic framework that makes it possible to learn to adjust the parameters of an algorithm given a description of the algorithm's behaviour~\cite{biedenkapp-et-al-ecai2020}.

We first describe \CTRL{} on a high level.
Given a parameterized algorithm $A$ with its configuration space $\tilde{\Theta}$, a set of problem instances $\insts$ the algorithm has to solve, 
a state description $\tilde{s}^\instance_t$ of the algorithm $A$ solving an instance $\instance \in \insts$ at step $t \in \mathbb{N}_0$, and a reward signal $\tilde{r}$ assessing the reward (e.g., runtime or number of state expansions) of using a control policy $\ctrlpol \in \ctrlpolspace$ to control $A$ on an instance $i \in \insts$, the goal is to find a (\emph{dynamic}) control policy
$\ctrlpol^* : \mathbb{N}_0 \times \tilde{\mathcal{S}} \times \insts \to \tilde{\Theta}$. This policy  adaptively chooses a configuration $\tilde{\theta} \in \tilde{\Theta}$ given a state $\tilde{s}_t \in \tilde{\mathcal{S}}$ of $A$ at time $t \in \mathbb{N}_0$ to optimize the reward of $A$ across the set of instances $\mathcal{S}$, i.e., 
$\ctrlpol^* \in \argmax_{\ctrlpol \in \ctrlpolspace} \EX[\tilde{r}(\ctrlpol,i)]$.
Note that the current time step $t \in \mathbb{N}_0$ and instance $i \in \insts$ can be encoded in the state description $\tilde{\mathcal{S}}$ of an algorithm $A$, which leads to a dynamic control policy, defined as $\ctrlpol_{\text{dac}} : \tilde{\mathcal{S}} \to \tilde{\Theta}$.

Figure~\ref{fig:pc} depicts the interaction between a control policy $\ctrlpol$ and a planning system $A$ schematically. At each time step $t$, the planner sends the current internal state $\tilde{s}^\instance_t$ and the corresponding reward $\tilde{r}^\instance_t$ to the control policy $\ctrlpol$ based on which the controller decides which parameter setting $h_{t+1}\in\tilde{\Theta}$ to use. The planner progresses according to the decision to the next internal state $\tilde{s}^i_{t+1}$ with reward $\tilde{r}^i_{t+1}$. 
This formalisation of \ctrl makes it possible to recover prior meta-algorithmic frameworks as special cases which we discuss below. 

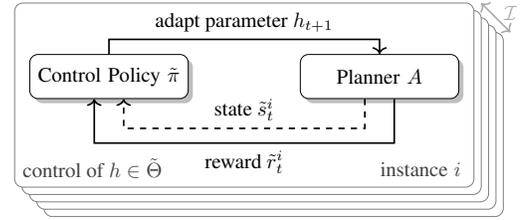
\begin{figure}[tbp]
\centering
\resizebox{0.9\columnwidth}{!}{
    \begin{tikzpicture}[node distance=2.1cm]

\node (Agent) [activity,
minimum height=.75cm] {Control Policy $\ctrlpol$};

\node (Algo) [activity, right of=Agent, xshift=2.5cm,
minimum height=.75cm] {Planner $A$};

\begin{pgfonlayer}{background}
\path (Agent -| Agent.west)+(-0.25,1.25) node (resUL) {};
\path (Algo.east |- Algo.south)+(0.25,-1.5) node (resBR) {};

\path [rounded corners, draw=black!50, fill=white] ($(resUL)+(0.5, -0.5)$) rectangle ($(resBR)+(0.5, -0.5)$);
\path [rounded corners, draw=black!50, fill=white] ($(resUL)+(0.375, -0.375)$) rectangle ($(resBR)+(0.375, -0.375)$);
\path [rounded corners, draw=black!50, fill=white] ($(resUL)+(0.25, -0.25)$) rectangle ($(resBR)+(0.25, -0.25)$);
\path [rounded corners, draw=black!50, fill=white] ($(resUL)+(0.125, -0.125)$) rectangle ($(resBR)+(0.125, -0.125)$);

\path [rounded corners, draw=black!50, fill=white] (resUL) rectangle (resBR);
\path (resBR)+(-.9,0.3) node [text=black!75] {\task $\instance$};
\path (resUL.east |- resBR.north)+(+1.2,0.2) node [text=black!75] {control of $h \in \tilde{\Theta}$};

\end{pgfonlayer}


\draw[myarrow] (Agent.north) -- ($(Agent.north)+(0.0,+0.25)$) -- ($(Algo.north)+(0.0,+0.25)$) node [above,pos=0.5] { adapt parameter $h_{t+1}$} node [below,pos=0.5] {} -- (Algo.north);
\draw[myarrow, dashed] ($(Algo.south)+(-0.25, 0)$) -- ($(Algo.south)+(-0.25, -0.5)$) -- ($(Agent.south)+(0.25, -0.5)$) node [above,pos=0.5] {state $\tilde{s}^i_{t}$} -- ($(Agent.south)+(0.25, 0)$);
\draw[myarrow] ($(Algo.south)+(0.25, 0)$) -- ($(Algo.south)+(0.25, -0.75)$) -- ($(Agent.south)+(-0.25, -0.75)$) node [below,pos=0.5] {reward $\tilde{r}^i_{t}$} -- ($(Agent.south)+(-0.25, 0)$);

\draw[<->, thick, draw=black!32.5] (resBR.east |- resUL.center) -- ($(resBR.east |- resUL.center)+(0.5, -0.5)$);
\path (resBR.east |- resUL.center)+(.5, -0.175) node [text=black!32.5] {$\insts$};
\path (resUL.west |- resUL.center)+(-.5, -0.175) node [text=black!0] {$\insts$}; 

\end{tikzpicture}
}
\caption{Dynamic configuration of parameter $h \in \tilde{\Theta}$ of \target $A$ on an \task
$\instance\in\insts$, at time step $t\in \mathbb{N}_0$.
Until $\instance$ is solved or a budget is exhausted, the \agent adapts parameter $h$, based on the internal state $\tilde{s}^i_t$ of $A$.}
\label{fig:pc}%
\end{figure}%

\section{Dynamic Heuristic Selection}
In this section, we will explain how dynamic algorithm configuration can be used in the context of dynamic heuristic selection and how it differs from time-adaptive or in short \emph{adaptive}
algorithm configuration and algorithm selection, which have already been used in the context of search with multiple heuristics.
\citeN{helmert-jair2006} introduced the idea of maintaining a set of heuristics $\heus$ each associated with a separate open list in order to allow the alternation between such heuristics. 
Considering $\heus$ as the configuration space $\tilde{\Theta}$ of a heuristic search algorithm $A$ and each state expansion as a time step $t$, it is possible to classify different dynamic heuristic selection strategies within the framework of dynamic algorithm configuration.
For example, alternation is an time-adaptive control policy because it maps each time step to a specific heuristic, i.e.,  configuration, independent of the instance or the state of the planner. 
The selection of a particular heuristic depending on the current instance before solving the instance, known as ``portfolio planner'', is an algorithm selection policy that depends only on the instance and not on the current time step or the internal state of the planner. 
Exceptions are policies that compare the heuristic values of states, such as the expansion of the state with the overall minimal heuristic value or according to a Pareto-optimality analysis \cite{roeger-helmert-icaps2010}. 
Such policies depend on the current state of the planner, but ignore the time step and the current instance being solved. 
This indicates that all three components --- instance, time step, and state of the planner --- can be important and helpful in selecting the heuristic for the next state expansion. 
The following summarizes existing approaches to heuristic selection within the framework of algorithm configuration.

\medskip

\begin{compactitem}
    \item \textit{Algorithm Selection}: 
    \begin{itemize}
        \item Policy: $\ctrlpol_{\text{as}}: \mathcal{I} \to \heus$
        \item Example: Portfolios \cite{seipp-et-al-icaps2012,cenamor-et-al-jair2016,sievers-et-al-aaai2019}
    \end{itemize}
    \item \textit{Adaptive Algorithm Configuration}: 
    \begin{itemize}
        \item Policy: $\ctrlpol_{\text{aac}} : \mathbb{N}_0 \to \heus$ 
        \item Example: Alternation \cite{roeger-helmert-icaps2010,seipp-et-al-aaai2015} 
    \end{itemize}
    \item \textit{Dynamic Algorithm Configuration}: 
    \begin{itemize}
        \item Policy: $\ctrlpol_{\text{dac}} : \mathbb{N}_0 \times \tilde{\mathcal{S}} \times \insts \to \heus$
        \item Example: Approach proposed in this paper
    \end{itemize}
\end{compactitem}

\subsection{An Approach based on Reinforcement Learning}

In this section, we describe all the parts required to dynamically configure a planning system so that for each individual time step, a dynamic control policy can decide which heuristic to use based on a dynamic control policy. Here, a time step is a single expansion step of the planning system.

\paragraph{State description.} Learning dynamic configuration policies requires descriptive state features that inform the policy about the characteristics and the behavior of the planning system in the search space.
Preferably, such features are domain-independent, such that the same features can be used for a wide variety of domains. 
In addition, such state features should be cheap to compute in order to keep the overhead as low as possible.

As consequence of both desiderata and the intended learning task we propose to use the following state features computed over the entries contained in the corresponding open list of each heuristic:
\begin{description}
    \item[$\max_h$:] maximum $h$ value for each heuristic $h \in \heus$;
    \item[$\min_h$:]  minimum $h$ value for each heuristic $h \in \heus$;
    \item[$\mu_h$:]  average $h$ value for each heuristic $h \in \heus$;
    \item[$\sigma^2_h$:] variance of the $h$ values for each heuristic $h \in \heus$;
    \item[$\#_h$:] number of entries for each heuristic $h \in \heus$;
    \item[$t$:] current time/expansion step $t \in \mathbb{N}_0$.
\end{description}
To measure progress, we do not directly use the values of each state feature, but compute the \emph{difference of each state feature} between successive time steps $t-1$ and $t$.
The configuration space is a finite set of $n$ heuristics to choose from, i.e., $\tilde{\Theta} = \heus = \{h_0,\dots,h_{n-1}\}$.

The described set of features is a starting point and domain independent, but does not contain any specific context information yet. In general, it is possible to describe an instance or domain with features that describe, for example, the variables, operators or the causal graph \cite{sievers-et-al-aaai2019}.
If the goal is to learn robust policies that can handle highly heterogeneous sets of instances, it is possible to add contextual information about the planning instance at hand, such as the problem size or the required preprocessing steps \cite{fawcett-et-al-icaps2014}, to the state des\-cription. However, in this work, we limit ourselves to domain-wise dynamic control policies and show that the concept of DAC can improve heuristic search for this setting in theory and practice.

\paragraph{Reward function.} Similar to the state description, the reward function we want to optimize should ideally be domain-independent, cheap and quick to compute. 
Since the goal is usually to quickly solve as many tasks as possible, a good reward function should reflect this desire.

We use a simple reward of $-1$ for each expansion step that the planning system has to perform in order to find a solution.
Using this reward function, a configuration policy learns to select heuristics that minimize the expected number of state expansions until a solution is found.
This reward function ignores aspects such as the quality of a plan, but its purpose is to reduce the search effort and thus improve search performance. 
Clearly, it is possible to define other reward functions with, e.g., dense rewards to make learning easier. 
We nevertheless demonstrate that already with our reward function and state features it is possible to learn dynamic control policies, which dominate algorithm selection and adaptive control policies in theory and practice.

\subsection{Dynamic Algorithm Configuration in Theory}
In this section, we investigate the theoretical properties of using \name{dac} for heuristic search algorithms.
In optimal planning, where the goal is to find a plan with minimal cost, the performance of heuristic search can be measured by the number of state expansions \cite{helmert-roeger-aaai2008}. 
This is different for satisficing planning, because plans with different costs can be found and there are generally no ``must expand'' states that need to be expanded to prove that a solution is optimal. 
However, the number of state expansions until \emph{any} goal state is found can be used to measure the guidance of a heuristic or heuristic selection \cite{richter-helmert-icaps2009,roeger-helmert-icaps2010}.

We want to answer the question of whether it can theoretically be beneficial to use dynamic control policies $\ctrlpol_{\text{dac}}$ over algorithm selection policies $\ctrlpol_{\text{as}}$ or adaptive algorithm configuration policies $\ctrlpol_{aac}$. 
Proposition \ref{thm:same_expansion_number} proves that for each heuristic search algorithm in combination with each collection of heuristics there is a dynamic control policy $\ctrlpol_{\text{dac}}$ which is as good as $\ctrlpol_{\text{as}}$ or $\ctrlpol_{aac}$ in terms of state expansions. 

\begin{proposition}\label{thm:same_expansion_number}
Independent of the heuristic search algorithm and the collection of heuristics, for each algorithm selection policy $\ctrlpol_{\text{as}}$ and adaptive algorithm configuration policy $\ctrlpol_{\text{aac}}$ there is a dynamic control policy $\ctrlpol_{\text{dac}}$ which expands at most as many states as $\ctrlpol_{\text{as}}$ and $\ctrlpol_{\text{aac}}$ until a plan is found for a given planning instance.
\end{proposition}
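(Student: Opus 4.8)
The plan is to exploit the fact that the dynamic control policy class is, by its very signature, a strict superset of both the algorithm selection and the adaptive configuration class: a policy $\ctrlpol_{\text{dac}} : \mathbb{N}_0 \times \tilde{\mathcal{S}} \times \insts \to \heus$ may simply ignore any subset of its arguments. Hence the first thing I would establish is a \emph{faithful-simulation} observation: for a deterministic greedy best-first search with fixed (FIFO) tie-breaking, the entire search trajectory --- and in particular the number of state expansions until a plan is found --- is completely determined by the instance together with the sequence of heuristics selected at each expansion step. Consequently, if a $\ctrlpol_{\text{dac}}$ reproduces, on a given instance, exactly the same sequence of heuristic choices as some reference policy, it expands exactly the same states in exactly the same order.

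With this observation in hand, the construction is immediate. Fix an arbitrary pair $(\ctrlpol_{\text{as}}, \ctrlpol_{\text{aac}})$, and for an instance $i \in \insts$ let $N_{\text{as}}(i)$ and $N_{\text{aac}}(i)$ denote the number of expansions each performs on $i$. I would define the dynamic policy per instance so that it mimics whichever reference policy is cheaper on that instance:
\[
\ctrlpol_{\text{dac}}(t, \tilde{s}, i) =
\begin{cases}
\ctrlpol_{\text{as}}(i) & \text{if } N_{\text{as}}(i) \le N_{\text{aac}}(i),\\
\ctrlpol_{\text{aac}}(t) & \text{otherwise.}
\end{cases}
\]
This is a well-defined function of $(t, \tilde{s}, i)$: the branch is selected by the (fixed) instance $i$, after which it returns either the constant heuristic $\ctrlpol_{\text{as}}(i)$ or the time-indexed heuristic $\ctrlpol_{\text{aac}}(t)$, ignoring the search state $\tilde{s}$ in both cases. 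By the simulation observation, on each instance $i$ this $\ctrlpol_{\text{dac}}$ emits precisely the heuristic sequence of $\ctrlpol_{\text{as}}$ (a constant sequence) or of $\ctrlpol_{\text{aac}}$, and therefore expands $\min\{N_{\text{as}}(i), N_{\text{aac}}(i)\}$ states. In particular it expands at most as many states as $\ctrlpol_{\text{as}}$ and at most as many as $\ctrlpol_{\text{aac}}$ on every instance, which is the claim.

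I expect the only genuine subtlety --- and thus the main thing to nail down carefully --- to be the simulation step, specifically the claim that identical heuristic-selection sequences yield identical expansion counts. This requires that all remaining sources of nondeterminism in the search (tie-breaking within and across the open lists, the order of successor generation, and duplicate handling) are fixed and independent of the policy, exactly as specified for the greedy best-first search with FIFO tie-breaking introduced earlier; once that is pinned down, the remainder is bookkeeping. A secondary point worth stating explicitly is that the argument uses the access of $\ctrlpol_{\text{dac}}$ to the instance $i$ in an essential way: it is precisely this extra argument that allows a single dynamic policy to dominate both reference classes simultaneously, rather than merely generalizing each one in isolation.
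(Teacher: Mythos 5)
Your proposal is correct and follows essentially the same route as the paper, whose entire proof is the one-line observation that DAC policies generalize both classes, so one may simply take $\ctrlpol_{\text{dac}} = \ctrlpol_{\text{as}}$ or $\ctrlpol_{\text{dac}} = \ctrlpol_{\text{aac}}$. Your version is in fact slightly tighter on one point: since the proposition asks for a single $\ctrlpol_{\text{dac}}$ dominating \emph{both} reference policies on the given instance, your per-instance branch on $\min\{N_{\text{as}}(i), N_{\text{aac}}(i)\}$ --- which exploits the instance argument in the DAC signature --- is precisely what makes the paper's informal ``or'' rigorous, and your faithful-simulation observation spells out the determinism assumption (fixed FIFO tie-breaking, policy-independent successor ordering and duplicate handling) that the paper leaves implicit.
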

\begin{proof}
DAC policies generalize algorithm selection and adaptive algorithm configuration policies, thus it is always possible to define $\ctrlpol_{\text{dac}}$ as $\ctrlpol_{\text{dac}} = \ctrlpol_{\text{as}}$ or \mbox{$\ctrlpol_{\text{dac}} = \ctrlpol_{\text{aac}}$}.
\end{proof}

With Proposition \ref{thm:same_expansion_number} it follows directly that an optimal algorithm configuration policy $\ctrlpol_{\text{dac}}^*$ is at least as good as an optimal algorithm selection policy $\ctrlpol_{\text{as}}^*$ and an optimal adaptive algorithm configuration policy $\ctrlpol_{\text{aac}}^*$:

\begin{corollary}\label{col:optimal_policy}
Independent of the heuristic search algorithm and the collection of heuristics, an optimal dynamic control policy $\ctrlpol_{\text{dac}}^*$ expands at most as many states as an optimal algorithm selection policy $\ctrlpol_{\text{as}}^*$ and an optimal adaptive algorithm configuration policy $\ctrlpol_{\text{aac}}^*$ until a plan $\plan$ is found for a planning task. \QED
\end{corollary}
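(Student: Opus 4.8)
The plan is to obtain the corollary directly from Proposition~\ref{thm:same_expansion_number} by combining the existence of a dominating \CTRL{} policy with the optimality of $\ctrlpol_{\text{dac}}^*$, using the transitivity of the ``expands at most as many states as'' relation. I would fix an arbitrary planning task and argue separately for algorithm selection and for adaptive algorithm configuration, since the two cases are symmetric.

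First I would let $\ctrlpol_{\text{as}}^*$ denote an optimal algorithm selection policy, i.e., one that minimizes the number of state expansions until a plan is found among all policies of the form $\ctrlpol_{\text{as}}: \insts \to \heus$. Applying Proposition~\ref{thm:same_expansion_number} to $\ctrlpol_{\text{as}}^*$ yields a dynamic control policy $\ctrlpol_{\text{dac}}$ that expands at most as many states as $\ctrlpol_{\text{as}}^*$ on this task. Since $\ctrlpol_{\text{dac}}^*$ is by definition optimal among all \CTRL{} policies, it expands at most as many states as this particular $\ctrlpol_{\text{dac}}$. Chaining the two inequalities gives that $\ctrlpol_{\text{dac}}^*$ expands at most as many states as $\ctrlpol_{\text{as}}^*$. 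I would then repeat the identical argument for an optimal adaptive algorithm configuration policy $\ctrlpol_{\text{aac}}^*$ over the class $\ctrlpol_{\text{aac}}: \mathbb{N}_0 \to \heus$, again invoking Proposition~\ref{thm:same_expansion_number} and the optimality of $\ctrlpol_{\text{dac}}^*$. Because both Proposition~\ref{thm:same_expansion_number} and the notion of optimality are stated independently of the concrete search algorithm and heuristic collection, the conclusion inherits this independence.

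Given Proposition~\ref{thm:same_expansion_number}, the derivation is essentially immediate, so there is no substantial obstacle. The only point that warrants a little care is that the optimal policies $\ctrlpol_{\text{as}}^*$, $\ctrlpol_{\text{aac}}^*$, and $\ctrlpol_{\text{dac}}^*$ be well-defined for the fixed task, i.e., that a minimizing policy actually exists rather than merely an infimum. Since the configuration space $\heus$ is finite and the number of expansions is a nonnegative integer bounded below, a policy attaining the minimum exists, and the transitivity chain goes through without further assumptions.
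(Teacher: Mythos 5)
Your proposal is correct and follows essentially the same route as the paper, which presents the corollary as an immediate consequence of Proposition~\ref{thm:same_expansion_number}: the optimal \CTRL{} policy $\ctrlpol_{\text{dac}}^*$ is at least as good as the dominating dynamic policy guaranteed by the proposition, which in turn is at least as good as $\ctrlpol_{\text{as}}^*$ or $\ctrlpol_{\text{aac}}^*$. Your additional remark that a minimizing policy exists (because expansion counts are nonnegative integers) is a fine point of rigor the paper leaves implicit, but it does not change the argument.
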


It is natural to ask to what extent the use of a dynamic control policy instead of an algorithm selection or an adaptive control policy can improve the search performance of heuristic search. 
We will show that for each algorithm selection policy $\ctrlpol_{\text{as}}$ and adaptive algorithm configuration policy $\ctrlpol_{\text{aac}}$, we can construct a family of planning tasks so that a dynamic control policy $\ctrlpol_{\text{dac}}$ will expand exponentially fewer states until a plan is found. 
For this purpose, we introduce a family of planning instances $\ptask{}_n$ with $O(n)$ propositional variables and $O(n)$ operators.
The induced transition system of $\ptask{}_n$ is visualized in Figure \ref{fig:proof_task}. 
There is exactly one goal path $s_0, s_1, s_2$, which is induced by the unique plan $\plan = \langle o_1, o_2 \rangle$. 
Furthermore, exactly two states are directly reachable from the initial state, $s_1$ and $s_3$. While state $s_1$ leads to the unique goal state $s_2$, from $s_3$ onward exponentially many states $s_4,\dots,s_{2^n-1}$ in $n=|\ptask_n|$, i.e., $\Omega(2^n) = \Omega(2^{|\ptask{}_n|})$, can be reached by the subsequent application of multiple actions. 

\begin{figure}
    \centering
    \resizebox{0.90\columnwidth}{!}{
        \begin{tikzpicture}
    \begin{scope}
        \node[align=center,draw] (s0) at (0,0) {$s_0$\\$h_0(s_0)=5$\\$h_1(s_0)=6$};
        \node[align=center,draw] (s1) at (3,0.85) {$s_1$\\$h_0(s_1)=5$\\$h_1(s_1)=3$};
        \node[align=center,draw,double] (s2) at (6,0.85) {$s_2 \models \goal$\\$h_0(s_2)=0$\\$h_1(s_2)=0$};
        \node[align=center,draw] (s3) at (3,-0.85) {$s_3$\\$h_0(s_3)=3$\\$h_1(s_3)=4$};
        \node[align=center,draw,dashed] (sn) at (6,-0.85) {$s_k \not\models \goal$\\$h_0(s_k)=1$\\$h_1(s_k)=1$\\ $\forall k \in \{4,\dots,2^{n-1}\}$};
        
        \draw[-{Latex[scale=1.2]}] (s0) to node [above, align=center] {$o_1$} (s1);
        \draw[-{Latex[scale=1.2]}] (s0) to node [above, align=center] {$o_3$} (s3);
        \draw[-{Latex[scale=1.2]}] (s1) to node [above, align=center] {$o_2$} (s2);
        \draw[-{Latex[scale=1.2]}] ($(s3.east)+(0,0.5)$) to node [above, align=center] {} ($(sn.west)+(0,0.5)$);
        \draw[-{Latex[scale=1.2]}] (s3) to node [above, align=center] {} (sn);
        \draw[-{Latex[scale=1.2]}] ($(s3.east)+(0,-0.5)$) to node [above, align=center] {} ($(sn.west)+(0,-0.5)$);
    \end{scope}
\end{tikzpicture}
    }
    \caption{Visualization of the induced transition system of the planning task family $\ptask{}_n$.\label{fig:proof_task}}
\end{figure}
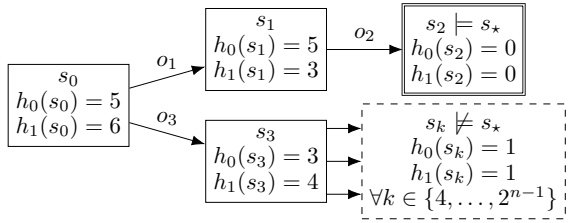

\begin{theorem}\label{thm:exp_expansion_number_aac}
For each adaptive algorithm configuration policy $\ctrlpol_{\text{aac}}$ there exists a family of planning instances $\ptask_n$, a collection of heuristics $\heus$ and a dynamic control policy $\ctrlpol_{\text{dac}}$, so that \gbf{} with $\heus$ and $\ctrlpol_{\text{aac}}$ expands exponentially more states in $|\ptask_n|$ than \gbf{} with $\heus$ and $\ctrlpol_{\text{dac}}$  until a plan $\plan$ is found.
\end{theorem}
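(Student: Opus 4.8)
The plan is to instantiate, for a \emph{given} adaptive policy $\ctrlpol_{\text{aac}}$, the concrete family $\ptask_n$ of Figure~\ref{fig:proof_task} together with two heuristics $\heus = \{h_0, h_1\}$ carrying exactly the values shown there, and a dynamic policy $\ctrlpol_{\text{dac}}$ that commits to the informative heuristic. The whole argument hinges on a single critical expansion step and on the observation that all exponentially many states $s_4, \dots, s_{2^{n-1}}$ reachable from $s_3$ receive the minimal value $1$ under \emph{both} heuristics, strictly below the value of the on-path state $s_1$ (where $h_0(s_1)=5$ and $h_1(s_1)=3$), and that this dead-end region is closed off from the goal path.

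First I would trace \gbf{} with multiple open lists from \init{}. The first expansion is forced, since both open lists initially contain only $s_0$; expanding it generates $s_1$ and $s_3$ into both lists. After this step the lists disagree on their top element: under $h_0$ the minimum is $s_3$ (value $3 < 5$), whereas under $h_1$ the minimum is $s_1$ (value $3 < 4$). This is the critical step — selecting the open list of $h_0$ pops $s_3$ and descends into the exponential dead-end region, while selecting the open list of $h_1$ pops $s_1$ and reaches the goal $s_2$ via $o_2$ one step later.

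The key step is to exploit the freedom in the construction to defeat the given $\ctrlpol_{\text{aac}}$. Because its second choice $\ctrlpol_{\text{aac}}(1) \in \heus$ depends only on the time step, I would label the heuristics so that $\ctrlpol_{\text{aac}}(1)$ is precisely the heuristic whose open list has $s_3$ on top; by the symmetry of the two roles this labelling is always possible. Then $\ctrlpol_{\text{aac}}$ pops $s_3$ at the critical step, and I would argue the search is now trapped: once $s_3$ is expanded, every state of the exponential region sits in both open lists with value $1$, strictly below $s_1$ in both heuristics, and no such state leads back to the goal path. Hence, whatever list $\ctrlpol_{\text{aac}}$ selects at any later step, the popped state has value $1$ and lies in the dead-end region, so \gbf{} must exhaust all $\Omega(2^{|\ptask_n|})$ of these states before $s_1$ can ever become the minimum. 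By contrast, $\ctrlpol_{\text{dac}}$ observes the planner state and always selects the list whose top is $s_1$, solving the task with a constant number of expansions ($s_0$, $s_1$, $s_2$). Comparing the two counts yields the claimed exponential gap.

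The main obstacle I anticipate is the quantifier structure rather than any calculation: the statement reads ``for each $\ctrlpol_{\text{aac}}$ there exists a family, heuristics, and $\ctrlpol_{\text{dac}}$'', so the construction is permitted to depend on $\ctrlpol_{\text{aac}}$, and I must make explicit that only the single choice $\ctrlpol_{\text{aac}}(1)$ matters — the forced first expansion and all later choices become irrelevant once the search is trapped. The two properties that make the trap robust, namely that the dead-end states dominate $s_1$ in \emph{both} heuristics and that the region cannot reconnect to the goal, are exactly what must be checked with care; I would additionally confirm that $\ptask_n$ can be encoded with $O(n)$ variables and $O(n)$ operators while still inducing $\Omega(2^n)$ reachable states, for instance via a binary-counter gadget placed behind $s_3$.
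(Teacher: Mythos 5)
Your proposal is correct and follows essentially the same route as the paper's proof: the same task family $\ptask_n$ and heuristics $\heus=\{h_0,h_1\}$ from Figure~\ref{fig:proof_task}, the same critical decision at time step $1$, the same symmetry/relabelling argument to defeat whichever choice $\ctrlpol_{\text{aac}}(1)$ makes, and a state-dependent $\ctrlpol_{\text{dac}}$ that avoids the exponential dead-end region (the paper instantiates it concretely as $\argmin_{h \in \heus} \mu_h$, i.e., via the proposed state features, while you phrase it directly in terms of which open list has $s_1$ on top — an immaterial difference under the paper's definition of dynamic control policies).
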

\begin{proof}
Let $\ctrlpol_{\text{aac}}$ be an adaptive algorithm configuration policy.
Now, we consider the family of planning tasks $\ptask_n$ (Figure \ref{fig:proof_task}) with $|\ptask_{n}| = O(n)$ and a collection of two heuristics $\heus = \{h_0,h_1\}$. 
The heuristic estimates of $h_0$ and $h_1$ are shown in Figure \ref{fig:proof_task} and the open lists of \gbf{} at each time step $t$ are visualized in Figure \ref{fig:proof_open}.
In time step $0$, it is irrelevant which heuristic is selected, always leading to time step $1$, where state $s_3$ is the most promising state according to heuristic $h_0$, while state $s_1$ is the most promising state according to heuristic $h_1$.
In time step $1$, $\ctrlpol_{\text{aac}}$ can either select heuristic $h_0$ or $h_1$. 
We first assume that $\ctrlpol_{\text{aac}}$ selects $h_0$ so that state $s_3$ is expanded, leading to exponentially many states $s_k$, which are all evaluated with $h_0(s_k) = h_1(s_k) = 1$ and thus are all expanded before $s_1$.
Therefore, the unique goal state $s_2$ is found after all other states in the state space $\states$ have been expanded. 

In comparison, for $\ctrlpol_{\text{dac}}$ we can pick the policy that always selects the heuristic with minimum average heuristic value of all states in the corresponding open list, i.e., $\argmin_{h \in \heus} \mu_h$.
Following $\ctrlpol_{\text{dac}}$, first $h_0$ and then $h_1$ is selected, generating the goal state $s_2$ in time step $1$.
Therefore, $\ctrlpol_{\text{dac}}$ only expands $2$ states, while $\ctrlpol_{\text{aac}}$ expands $2^{n-2}$ states until a goal state is found. 

Finally, for a policy $\ctrlpol_{\text{aac}}$ that selects $h_1$ at time step $1$, it is possible to swap the heuristic estimates of $h_0$ and $h_1$ in the constructed collection of heuristics, resulting in the same number of state extensions.
\end{proof}

\begin{figure}
    \centering
    \resizebox{0.9\columnwidth}{!}{
        \begin{tikzpicture}
\draw node[align=center,label={[name=mylabel,label distance=-2mm]above:$h_0$},
    append after command={[rounded corners]
(a.north-|mylabel.west)-|(a.west)|-(a.south)-|(a.east)|-(a.north-|mylabel.east)},
] 
(a) at(0,0) {\\\\\\$\langle s_0, 5 \rangle$};

\draw node[align=center,label={[name=mylabel,label distance=-2mm]above:$h_1$},
    append after command={[rounded corners]
(b.north-|mylabel.west)-|(b.west)|-(b.south)-|(b.east)|-(b.north-|mylabel.east)},
] 
(b) at(0,-2.35) {\\\\\\$\langle s_0, 6 \rangle$};

\draw node[align=center,label={[name=mylabel,label distance=-2mm]above:$h_0$},
    append after command={[rounded corners]
(c.north-|mylabel.west)-|(c.west)|-(c.south)-|(c.east)|-(c.north-|mylabel.east)},
] 
(c) at(2.5,0) {\\\\$\langle s_3, 3 \rangle$\\$\langle s_1, 5 \rangle$};

\draw node[align=center,label={[name=mylabel,label distance=-2mm]above:$h_1$},
    append after command={[rounded corners]
(d.north-|mylabel.west)-|(d.west)|-(d.south)-|(d.east)|-(d.north-|mylabel.east)},
] 
(d) at(2.5,-2.35) {\\\\$\langle s_1, 3 \rangle$\\$\langle s_3, 4 \rangle$};

\draw node[align=center,label={[name=mylabel,label distance=-2mm]above:$h_0$},
    append after command={[rounded corners]
(e.north-|mylabel.west)-|(e.west)|-(e.south)-|(e.east)|-(e.north-|mylabel.east)},
] 
(e) at(5,0) {\\$\langle s_k,1 \rangle$\\$\cdots$\\$\langle s_1, 5 \rangle$};

\draw node[align=center,label={[name=mylabel,label distance=-2mm]above:$h_1$},
    append after command={[rounded corners]
(f.north-|mylabel.west)-|(f.west)|-(f.south)-|(f.east)|-(f.north-|mylabel.east)},
] 
(f) at(5,-2.35) {\\$\langle s_k,1 \rangle$\\$\cdots$\\$\langle s_1, 3 \rangle$};

\draw node[align=center,label={[name=mylabel,label distance=-2mm]above:$h_0$},
    append after command={[rounded corners]
(g.north-|mylabel.west)-|(g.west)|-(g.south)-|(g.east)|-(g.north-|mylabel.east)},
] 
(g) at(7.5,0) {\\\\$\mathbf{\langle s_2, 0 \rangle}$\\$\langle s_3, 3 \rangle$};

\draw node[align=center,label={[name=mylabel,label distance=-2mm]above:$h_1$},
    append after command={[rounded corners]
(h.north-|mylabel.west)-|(h.west)|-(h.south)-|(h.east)|-(h.north-|mylabel.east)},
] 
(h) at(7.5,-2.35) {\\\\$\mathbf{\langle s_2, 0 \rangle}$\\$\langle s_3, 4 \rangle$};


\node (e1) at ($(a.north)+(1,0.75)$) {$\ctrlpol_{as},\ctrlpol_{aac}$: $h_0$};
\draw[] ($(a.north)+(-0.25,0.45)$) |- (e1);
\draw[-{Latex[scale=1.0]}] (e1) -| ($(c.north)+(-0.3,0.45)$);

\node (e2) at ($(c.north)+(1.4,0.75)$) {$\ctrlpol_{as},\ctrlpol_{aac}$: $h_0$};
\draw[] ($(c.north)+(0.25,0.45)$) |- (e2);
\draw[-{Latex[scale=1.0]}] (e2) -| ($(e.north)+(0.1,0.45)$);

\node (e3) at ($(a.south)+(1,-2.95)$) {$\ctrlpol_{dac}$: $h_0$};
\draw[] ($(a.south)+(-0.25,-2.65)$) |- (e3);
\draw[-{Latex[scale=1.0]}] (e3) -| ($(c.south)+(-0.3,-2.65)$);

\node (e4) at ($(e.south)+(0,-2.95)$) {$\ctrlpol_{dac}$: $h_1$};
\draw[] ($(c.south)+(0.25,-2.65)$) |- (e4);
\draw[-{Latex[scale=1.0]}] (e4) -| ($(g.south)+(-0.1,-2.65)$);

\draw[dashed] (-1,1.05) rectangle (1,-3.3) node[pos=.5] {Step: $0$};
\draw[dashed] (-1+2.5,1.05) rectangle (1+2.5,-3.3) node[pos=.5] {Step: $1$};
\draw[dashed] (-1+5,1.05) rectangle (1+5,-3.3) node[pos=.5] {Step: $2a$};
\draw[dashed] (-1+7.5,1.05) rectangle (1+7.5,-3.3) node[pos=.5] {Step: $2b$};

\end{tikzpicture}
    }
    \caption{Visualization of two heuristics used to solve an instance of the planning task family $\ptask{}_n$.\label{fig:proof_open}}
\end{figure}
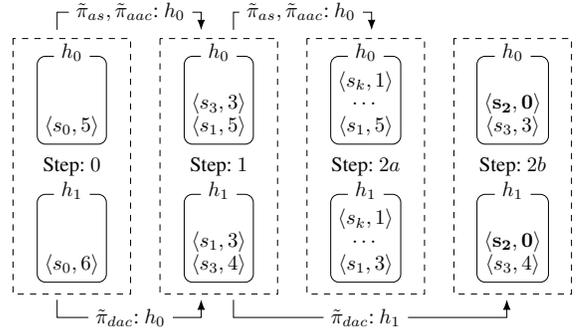

\begin{theorem}\label{thm:exp_expansion_number_as}
For each algorithm selection policy $\ctrlpol_{\text{as}}$ there exists a family of planning instances $\ptask_n$, a collection of heuristics $\heus$ and a dynamic control policy $\ctrlpol_{\text{dac}}$, so that \gbf{} with $\heus$ and $\ctrlpol_{\text{as}}$ expands exponentially more states in $|\ptask_n|$ than \gbf{} with $\heus$ and $\ctrlpol_{\text{dac}}$  until a plan $\plan$ is found. \QED
\end{theorem}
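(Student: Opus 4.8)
The plan is to reuse, without modification, the planning task family $\ptask_n$, the two-heuristic collection $\heus = \{h_0, h_1\}$, and the dynamic control policy $\ctrlpol_{\text{dac}} = \argmin_{h \in \heus} \mu_h$ (select the heuristic whose open list has the smallest average $h$-value) from the proof of Theorem~\ref{thm:exp_expansion_number_aac}. The observation that drives the argument is that an algorithm selection policy is, on any single instance, strictly more constrained than an adaptive one: since $\ctrlpol_{\text{as}}$ depends only on the instance, it commits to a single fixed heuristic $h_j \in \heus$ for the entire run on $\ptask_n$, and in particular it must use this same $h_j$ at the decisive expansion step (step $1$ in Figure~\ref{fig:proof_open}), exactly as a constant-in-time adaptive policy would. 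The theorem therefore reduces to the behaviour already analysed in Theorem~\ref{thm:exp_expansion_number_aac} at that one step.

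Concretely, I would split on the heuristic that the given $\ctrlpol_{\text{as}}$ selects on $\ptask_n$. If $\ctrlpol_{\text{as}}(\ptask_n) = h_0$, the construction of Theorem~\ref{thm:exp_expansion_number_aac} applies verbatim: committing to $h_0$ expands $s_3$ at step $1$ and then all $\Omega(2^{n}) = \Omega(2^{|\ptask_n|})$ dead-end states $s_4, \dots, s_{2^{n-1}}$ (each with $h_0$-value $1$) before the goal-bearing state $s_1$ (with $h_0$-value $5$) is ever reached. If instead $\ctrlpol_{\text{as}}(\ptask_n) = h_1$, I would apply the same exchange of the estimates of $h_0$ and $h_1$ used at the end of the previous proof, so that the committed heuristic again becomes the uninformative one and the exponential blow-up recurs. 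In both cases $\ctrlpol_{\text{dac}}$ compares averages only and is oblivious to the labelling, so it still expands just $s_0$ and $s_1$---two states---establishing the separation between $2$ and $\Omega(2^{|\ptask_n|})$.

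The single point I expect to require care is ruling out an apparent circularity in the swap: one might worry that relabelling $h_0$ and $h_1$ changes which heuristic $\ctrlpol_{\text{as}}$ selects. I would dispel this by noting that an algorithm selection policy decides purely from features of the planning task $\ptask_n$---its variables, operators, and goal condition---and hence independently of the numerical estimates we later attach to the two heuristics. Thus I can first fix the transition system of $\ptask_n$, read off the committed index $j$, and only afterwards place the uninformative estimates on $h_j$ and the informative ones on $h_{1-j}$, so that no choice of $\ctrlpol_{\text{as}}$ escapes the trap. Equivalently---and this is the framing I would ultimately present---restricted to a single instance $\ctrlpol_{\text{as}}$ is merely a constant adaptive policy, so the statement follows immediately from Theorem~\ref{thm:exp_expansion_number_aac} together with its swap argument and needs no new construction, consistent with its being stated without a separate proof.
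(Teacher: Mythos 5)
Your proof is correct, but it takes a genuinely different route from the paper's. The paper (appendix, Theorem~\ref{thm:exp_expansion_number_as_2}) does not reduce to Theorem~\ref{thm:exp_expansion_number_aac}; instead it builds a modified family $\ptask'_n$ in which an extra state $s'$ is inserted between $s_1$ and $s_2$, with $h_0(s')=2$ and $h_1(s')=10$. On that family \emph{each} of the two fixed heuristics is individually exponentially bad: committing to $h_0$ walks into the $s_3$ trap exactly as in Theorem~\ref{thm:exp_expansion_number_aac}, and committing to $h_1$ does too, because once $s'$ and $s_3$ are both open, $h_1$ prefers $s_3$ (since $h_1(s_3)=4 < 10 = h_1(s')$). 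The min-average DAC policy instead selects $h_0$, then $h_1$, then $h_0$, solving the task after three expansions. The payoff of the paper's construction is that the case analysis needs no relabelling at all: a single triple of task family, $\heus$, and $\ctrlpol_{\text{dac}}$ defeats \emph{every} algorithm selection policy simultaneously, and the argument survives even the stricter reading in which a policy picks an actual heuristic function rather than a positional label. Your reduction buys economy --- no new construction is needed, and the observation that, restricted to one instance, an AS policy is just a constant-in-time adaptive policy is exactly right --- but it inherits the swap device, so your collection $\heus$ must be tailored to the given policy, and its soundness rests on the convention that the policy's output is an index whose associated estimates the adversary may fix afterwards. That convention is legitimate here: the paper's own proof of Theorem~\ref{thm:exp_expansion_number_aac} uses the identical swap, and you correctly defuse the circularity by noting that $\ctrlpol_{\text{as}}$ reads only the task, not the estimates. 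So your argument meets the paper's own standard of rigour; it is just weaker in that it proves the per-policy statement rather than exhibiting one universal counterexample family, which is what the paper's modified construction additionally provides.
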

For the proof of this theorem, we refer to the longer arXiv version of this paper \citep{speck-et-al-arxiv2020}.

In Theorems \ref{thm:exp_expansion_number_aac} and \ref{thm:exp_expansion_number_as} we assume for simplicity that expanded states are directly removed from all open lists.  
In practice, open lists are usually implemented as min-heaps, and it is costly to search and remove states immediately.
Thus, states that have already been expanded are kept in the open lists and ignored as soon as they have reached the top. 
We note that this does not affect the theoretical results.

Finally, we want to emphasize that all results presented are theoretical and based on the assumption that it is possible to learn good dynamic control policies. 
Next, we show that it is possible in practice to learn such dynamic control policies.


\begin{table*}[tbp]
    \centering
    \resizebox{0.825\textwidth}{!}{
        \begin{tabular}{lrrrrrrr |rrc}
    \toprule
     Algorithm & \multicolumn{3}{c}{\name{control policy}} & \multicolumn{4}{c}{\name{single heuristic}} & \multicolumn{3}{|c}{\name{best as (oracle)}} \\
     \cmidrule(lr){2-4} \cmidrule(lr){5-8} \cmidrule(lr){9-11} 
     Domain (\# Inst.) & \multicolumn{1}{c}{\textbf{\name{rl}}} & \multicolumn{1}{c}{\name{rnd}} & \multicolumn{1}{c}{\name{alt}} & \multicolumn{1}{c}{$h_{\text{ff}}$} & \multicolumn{1}{c}{$h_{\text{cg}}$} & \multicolumn{1}{c}{$h_{\text{cea}}$} & \multicolumn{1}{c}{$h_{\text{add}}$} & 
     \multicolumn{1}{|c}{\textbf{\name{rl}}} &
     \multicolumn{1}{c}{\name{alt}} &
     \multicolumn{1}{c}{\name{single $h$}} \\
     \cmidrule(lr){1-1} \cmidrule(lr){2-8} \cmidrule(lr){9-11}
     \rowcolor{Gray}
     \name{barman} (100) & \textbf{84.4} & 83.8 & 83.3 & 66.0 & 17.0 & 18.0 & 18.0 & \textbf{89.0} & 84.0 & 67.0  \\
     \name{blocksworld} (100) & \textbf{92.9} & 83.6 &83.7 & 75.0 & 60.0 & 92.0 & 92.0 & \textbf{96.3} & 88.0 & 93.0 \\
     \rowcolor{Gray}
     \name{childsnack} (100) & \textbf{88.0} & 86.2 & 86.7 & 75.0 & 86.0 & 86.0 & 86.0 & \textbf{88.0} & \textbf{88.0} & 86.0 \\
     \name{rovers} (100) & 95.2 & \textbf{96.0} & \textbf{96.0} & 84.0 & 72.0 & 68.0 & 68.0 & \textbf{96.0} & \textbf{96.0} & 91.0 \\
     \rowcolor{Gray}
     \name{sokoban} (100) & 87.7 & 87.1 & 87.0 & 88.0 & \textbf{90.0} & 60.0 & 89.0 & 88.6 & 87.0 & \textbf{92.0} \\
     \name{visitall} (100) & 56.9 & 51.0 & 51.5 & 37.0 & \textbf{60.0} & \textbf{60.0} & \textbf{60.0} & \textbf{61.4} & 52.0 & 60.0 \\
     \cmidrule(lr){1-1} \cmidrule(lr){2-8} \cmidrule(lr){9-11}
     \textsc{sum} (600) & \textbf{505.1} & 487.7 & 488.2 & 425.0 & 385.0 & 384.0 & 413.0 & \textbf{519.3} & 495.0 & 489.0 \\
     \bottomrule
\end{tabular}
 
    }
    \caption{Average coverage of different policies for the selection of a heuristic in each expansion step when evaluating the strategies on the prior unseen \emph{test} set. 
    The first three columns are control policies, the next four are individual heuristic searches, while the last three represent the \emph{best algorithm selection} of the corresponding strategies, i.e., oracle selector for each instance.}
    \label{tab:real_coverage}
\end{table*}

\section{Empirical Evaluation}

We conduct experiments\footnote{Resources: \url{https://github.com/speckdavid/rl-plan}
} to measure the performance of our reinforcement learning (\name{rl}) approach on domains of the International Planning Competition (IPC).
For each domain, the \name{rl} policies are trained on a training set and evaluated on a disjoint prior unseen test set of the same domain obtained by a random split.
Note that the policies we consider here are not domain-independent, although it is generally possible to add instance- and domain-specific information to the state features. 
We leave the task of learning domain-independent policies for future work.

\subsection{Setup}

All experiments are conducted with \name{Fast Downward} \cite{helmert-jair2006} as the underlying planning system. 
We use (``eager'') greedy best-first search \cite{richter-helmert-icaps2009} and min-heaps to represent the open lists \cite{roeger-helmert-icaps2010}.
Although there are many more sophisticated search strategies and components, we choose a vanilla search strategy to reduce the factors that might affect the comparison of the actual research question of whether the learned \name{dac} policies can improve the search performance of heuristic search.
Nevertheless, our approach is in principle also capable of handling more complex search strategies and components, such as lazy eager search with preferred operators and simple handcrafted \name{dac} like policies such as boosting heuristics \cite{richter-et-al-ipc2011}.

We implemented an extension for \name{Fast Downward}, which makes it possible to communicate with a controller (dynamic control policy) via TCP/IP and thus to send relevant information (state features and reward) in each time/expansion step and to receive the selected parameter (heuristic). 
This architecture allows the planner and controller to be decoupled, making it easy to replace components.
We considered four different heuristic estimators as configuration space, i.e., $\tilde{\Theta} = H = \{h_{\text{ff}},h_{\text{cg}}, h_{\text{cea}},h_{\text{add}}\}$ which can be changed at each time step:
\begin{inparaenum}[(1)]
    \item the FF heuristic $h_{\text{ff}}$ \cite{hoffmann-nebel-jair2001},
    \item the causal graph heuristic $h_{\text{cg}}$ \cite{helmert-icaps2004},
    \item the context-enhanced additive heuristic $h_{\text{cea}}$ \cite{helmert-geffner-icaps2008}, and
    \item the additive heuristic $h_{\text{add}}$ \cite{bonet-geffner-aij2001}.\footnote{We also conducted additional experiments with five heuristics instead of four, including the $h_{\text{lm-count}}$ heuristic \cite{richter-et-al-aaai2008}; please see \techreport{}.
}
\end{inparaenum}

For evaluation (final planning runs) we used a maximum of 4 GB memory and 5 minutes runtime.
All experiments were run on a compute cluster with nodes equipped with two Intel Xeon Gold 6242 32-core CPUs, 20 MB cache and 188 GB shared RAM running Ubuntu 18.04 LTS 64 bit. 

Similar to \citeN{biedenkapp-et-al-ecai2020}, we use $\epsilon$-greedy deep Q-learning in the form of a double DQN~\cite{hasselt-et-al-aaai2016} implemented in \name{chainer} \cite{tokui-et-al-kdd2019} (\name{chainerRL} v0.7.0) to learn the dynamic control policies. 
The networks are trained using \name{Adam}\footnote{We use \name{chainer}'s v0.7.0 default parameters for \name{Adam}.} \cite{kingma-et-al-arxiv2015} for $10^6$ update steps on a single machine of our cluster with two CPU cores and 20 GB RAM.
We use a cutoff of $7\,500$ control/expansion steps in order to avoid policies being executed arbitrarily long during training. 
Complex instances may not be solved within this cutoff, even with the optimal policy, and thus learning occurs on simpler instances.
However, the underlying assumption is that well performing policies for smaller instances generalize to larger instances within a domain. 
To determine the quality of a learned policy, we evaluated it every $30\,000$ steps during training and save the best policy we have seen so far. 
In total, we performed $5$ independent runs of our control policies for each domain, for which we report the average performance. The policies are represented by neural networks for which we determined the hyperparameters in a white-box experiment on a new artificial domain (see \techreport{}) and kept these hyperparameters fixed for all experiments.

\subsection{Experiments}
We evaluated the performance of our \name{rl} approach on six domains of the International Planning Competition (IPC). 
These domains were chosen because there are instance generators available online\footnote{\url{https://github.com/AI-Planning/pddl-generators}} that make it possible to create a suitable number of instances of different sizes. 
Furthermore, instances of these domains usually require a significant number of state expansions in order to find a plan. 
For this purpose, we generated $200$ instances for all domains and randomly divided them into disjoint training and test sets with the same size of $100$ instances each.
For each domain we trained five dynamic control policies on the training set and compared them with other approaches on the unseen test set. We are mainly interested in comparing different policies for heuristic selection, which is why, here, the planner always maintains all four open lists, even if only one heuristic is used, and the controller, i.e., the dynamic control policy, alone decides which heuristic is selected. 

Table \ref{tab:real_coverage} shows the percentage of solved instances per domain, i.e., the average coverage, on the test set.
Each domain has a score in the range of $0$-$100$, with larger values indicating more solved instances on average. 
More precisely, it is possible to obtain a score between $0$ and $1$ for each planning instance.
A value of 0 means that the instance was never solved by the approach, $0.5$ means that the instance was solved in half the runs, and $1$ means that the instance was always solved.
These scores are added up to give the average coverage per domain. 

The first three columns correspond to control policies. 
Entry \name{rl} is the average coverage of the five trained dynamic control policies based on reinforcement learning, each averaging over $25$ runs with different seeds. 
Entry \name{rnd} denotes the average coverage of $25$ runs, where a random heuristic is selected in each step. 
Entry \name{alt} stands for the average over all possible permutations of the execution of alternation. 
Note that there are $4!=24$ different ways of executing alternation with four different heuristics. 
The \name{single heuristic} columns show the coverage when only the corresponding heuristic is used. 
Finally, the columns for selecting the \emph{best} algorithm selection (\textsc{best as}) stand for the use of an oracle selector, which selects the best configuration of the corresponding technique for each instance. 
In other words, the best algorithm selection for \name{rl} is to choose the best dynamic control policy from the five trained policies for each instance, the best algorithm selection for \name{alt} is to choose the best permutation of alternation for each instance and the best algorithm selection for \name{single $h$} is to choose the best heuristic for each instance. 

\paragraph{Coverage.} The results of Table \ref{tab:real_coverage} show that our approach (\name{rl}) performs best on average in terms of coverage (individual coverage of the five policies: $505.4$, $500.6$, $501.6$, $507.4$, $510.1$).    
\name{alt} is slightly better than the uniform randomized choice of a heuristic \name{rnd}, which indicates that the most important advantage of \name{alt} is to use each heuristic equally with frequent switches and not to switch between them systematically. 
Furthermore, consistent with the results of \citeN{roeger-helmert-icaps2010}, single heuristics perform worse than the use of multiple heuristics. 
Interestingly, in the domain \name{visitall}, single heuristics have the highest coverage and while \name{rnd} and \name{alt} have a low coverage, \name{rl} performs better. 
This indicates that in this domain, the dynamic control policies of \name{rl} were able to infer that a static policy is well performing or to exclude certain single heuristics. 
In \name{blocksworld}, \name{rl} has the highest coverage among all approaches. 
A possible explanation is that a dynamic policy is the key to solving difficult instances in this domain. 
This assumption is supported by the observation that the best algorithm selection, i.e., the oracle selection of \name{rl}, clearly exceeds the other approaches in \name{blocksworld}. 
Finally, in \name{rover}, the use of multiple heuristics seems to be important, and while \name{rl} scores better than using single heuristics, the learned policy scores worse than \name{rnd} and \name{alt}. 
This may be due to overfitting which we will discuss below.
We also compare our approach to the theoretically best possible algorithm selector. Considering the columns \textsc{Best AS}, we observe that oracle single heuristic selection and oracle alternating selection do not perform better than the average performance of our learned \name{rl} policies, which shows that 1) heuristic search with multiple heuristics can in practice benefit from dynamic algorithm configuration and 2) it is possible to learn well performing dynamic policies domain-wise. Even under the unrealistic circumstances of an \emph{optimal} algorithm selector, our learned policies perform better and therefore outperform all possible algorithm selection policies.

If we increase the configuration space by adding another heuristic ($h_{\text{lm-count}}$) the overall coverage of the control policies increases. 
However, the results are still qualitatively similar to those presented here, showing that the learned \name{dac} policies perform best overall (see \techreport{}).

We also want to mention the computational overhead of our \name{rl} approach compared to \name{alt} and \name{single heuristic} search approaches. 
While the performance of \name{rl} on the test set still exceeds the \name{single heuristic} search of \name{Fast Downward} for all four heuristics with a \emph{single} open list (maintaining only the used heuristic), \name{rl} performs slightly worse than the internal heuristic alternation strategy of \name{Fast Downward}. 
In the future, the overhead can be reduced by integrating the reinforcement learning part directly in \name{Fast Downward} instead of communicating via TCP/IP. 

\begin{table}[tbp]
    \centering
    \subfloat[Test set]{
    \resizebox{0.95\columnwidth}{!}{
        \begin{tabular}{lrrrrrrr}
    \toprule
     Algorithm & \multicolumn{3}{c}{\name{control policy}} & \multicolumn{4}{c}{\name{single heuristic}} \\
     \cmidrule(lr){2-4} \cmidrule(lr){5-8}
     Metric & \multicolumn{1}{c}{\textbf{\name{rl}}} & \multicolumn{1}{c}{\name{rnd}} & \multicolumn{1}{c}{\name{alt}} & \multicolumn{1}{c}{$h_{\text{ff}}$} & \multicolumn{1}{c}{$h_{\text{cg}}$} & \multicolumn{1}{c}{$h_{\text{cea}}$} & \multicolumn{1}{c}{$h_{\text{add}}$} \\
     \midrule
     \rowcolor{Gray}
     \textsc{coverage} & \textbf{84.2} & 81.3 & 81.4 & 70.8 & 64.2 & 64.0 & 68.8 \\
     \textsc{guidance} & \textbf{38.5} & 37.4 & 37.5 & 30.8 & 27.6 & 28.6 & 30.4 \\
     \rowcolor{Gray}
     \textsc{speed} & \textbf{66.6} & 62.8 & 62.8 & 54.9 & 50.4 & 50.3 & 54.0 \\
     \textsc{quality} & \textbf{76.2} & 76.0 & 76.0 & 65.8 & 57.6 & 56.2 & 60.9 \\
     \bottomrule
\end{tabular}
    }
    \label{tab:real_metrics_test}}
    
    \subfloat[Training set]{
    \resizebox{0.95\columnwidth}{!}{
        \begin{tabular}{lrrrrrrr}
    \toprule
     Algorithm & \multicolumn{3}{c}{\name{control policy}} & \multicolumn{4}{c}{\name{single heuristic}} \\
     \cmidrule(lr){2-4} \cmidrule(lr){5-8}
     Metric & \multicolumn{1}{c}{\textbf{\name{rl}}} & \multicolumn{1}{c}{\name{rnd}} & \multicolumn{1}{c}{\name{alt}} & \multicolumn{1}{c}{$h_{\text{ff}}$} & \multicolumn{1}{c}{$h_{\text{cg}}$} & \multicolumn{1}{c}{$h_{\text{cea}}$} & \multicolumn{1}{c}{$h_{\text{add}}$} \\
     \midrule
     \rowcolor{Gray}
     \textsc{coverage} & \textbf{87.0} & 83.6 & 83.0 & 71.7 & 64.3 & 65.0 & 68.5 \\
     \textsc{guidance} & \textbf{39.8} & 38.3 & 38.4 & 31.4 & 26.6 & 28.8 & 30.2 \\
     \rowcolor{Gray}
     \textsc{speed} & \textbf{69.3} & 65.3 & 65.4 & 56.0 & 49.1 & 51.1 & 54.2 \\
     \textsc{quality} & \textbf{79.5} & 77.9 &  77.5 & 66.8 & 57.3 & 58.0 & 61.3 \\
     \bottomrule
\end{tabular}
    }
    \label{tab:real_metrics_train}}
    \caption{A comparison of different control policies and single heuristic search measuring coverage, guidance, speed and solution quality on the prior unseen \emph{test} set \protect\subref{tab:real_metrics_test} and the \emph{training} set  \protect\subref{tab:real_metrics_train}. Higher scores are preferable for all metrics.}
    \label{tab:real_metrics}
\end{table}

\paragraph{Guidance, speed and quality.} Table \ref{tab:real_metrics} shows four different metrics including the coverage from above. We additionally evaluate the guidance, speed and quality for each approach with a rating scale \cite{richter-helmert-icaps2009,roeger-helmert-icaps2010}. 
For \emph{guidance}, tasks solved within one state expansion get one point, while unsolved tasks or tasks solved with more than $10^6$ state expansions get zero points. 
Between these extremes the scores are interpolated logarithmically. 
For \emph{speed} the algorithm gets one point for tasks solved within one second, while the algorithm gets zero points for unsolved tasks or tasks solved in $300$ seconds. 
For \emph{quality} the algorithm gets a score of $c^* /c$ for a solved task, where $c$ is the cost of the reported plan and $c^*$ is the cost of the best plan found with any approach. 
Finally, the sum of each metric is divided by the number of domains to obtain a total score between $0$ and $100$.
Considering those metrics, control policies perform better than single heuristic approaches. Furthermore, dynamic control polices obtained by \name{rl} perform best according to all metrics. 
However, this analysis favors approaches which solve more instances than others. 
Recall that plan quality is not taken into account when learning a policy, which explains the small advantage of \name{rl} in plan quality, even though more instances have been solved by \name{rl}.

\begin{figure}
\centering
    \subfloat[Policy similarity]{
    \resizebox{0.85\columnwidth}{!}{
        \includegraphics[width=.45\textwidth]{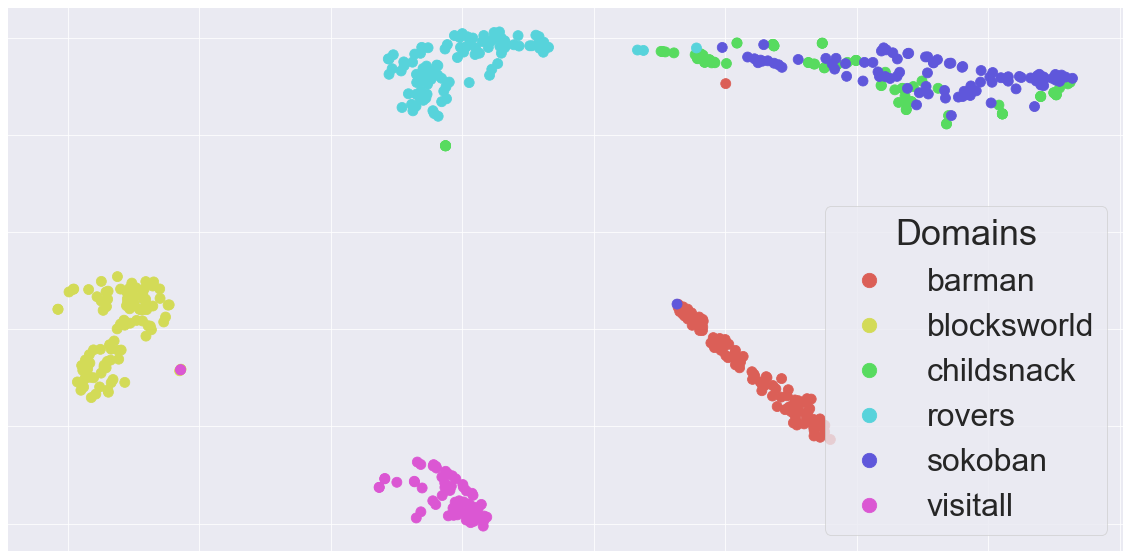}
    }
    \label{fig:h_domain}}
    
    \subfloat[Switching frequency]{
    \resizebox{0.85\columnwidth}{!}{
        \includegraphics[width=.45\textwidth]{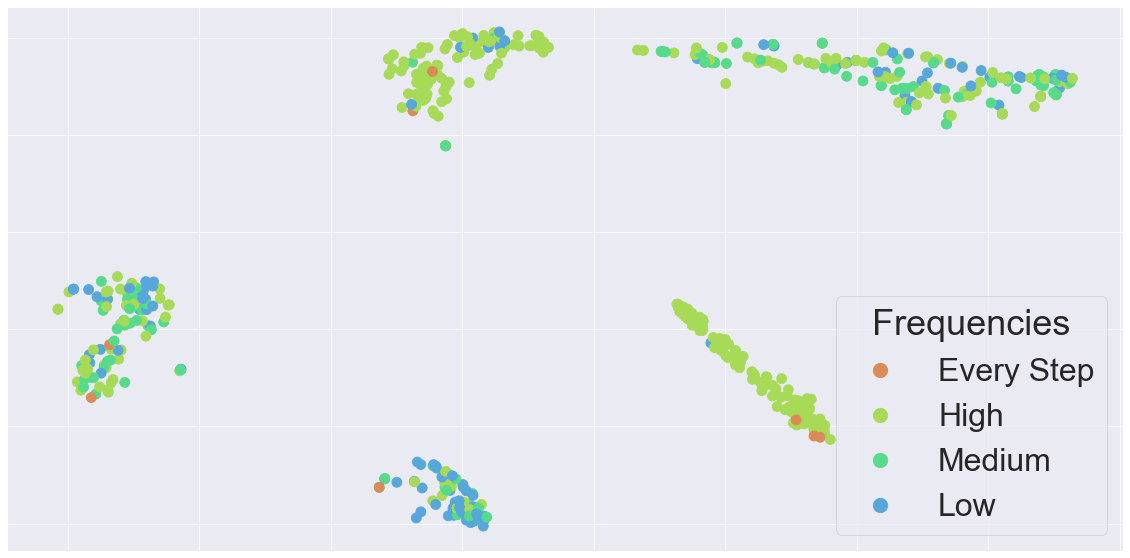}
    }
    \label{fig:h_switching}}
    \caption{Similarity of the learned \name{rl} policies using t-SNE \cite{hinton-roweis-nips2002}. 
    Each point represents a run of a policy on a different instance of the test set. 
    The distances between the points represents the similarity of the policies heuristic usage.
    }
    \label{fig:h_analysis}
\end{figure}

\paragraph{Policy analysis.}
We analyzed the resulting policies on the test set considering all successful runs, i.e., runs where a plan was found. 
Our first finding is that the \name{rl} policies often favor one of the heuristics over all others. 
In \name{barman}, \name{childsnack}, \name{rovers}, and \name{visitall}, the $h_{\text{ff}}$ heuristic is preferred, while in \name{blocksworld} the $h_{\text{add}}$ and in \name{visitall} the $h_{\text{cea}}$ is most often chosen. 
Interestingly, the percentage of use of the preferred heuristic varies greatly by domain. 
In \name{sokoban}, e.g., $h_{\text{ff}}$ is selected for $98\%$ expansion steps on average, while in \name{Barman}, the policies only select this
heuristic $60\%$ of the time on average. 

This raises the question of how similar the learned policies are and whether the policies are dynamic in the sense that they often switch between different heuristics.
To visualize the policies, we used t-distributed stochastic neighbor embedding \citep[t-SNE;][]{hinton-roweis-nips2002}, which maps higher-dimensional data into a 2-D space for visualization (Figure \ref{fig:h_analysis}). 
As input, we used a vector of four numbers for each instance, indicating the percentage of times a heuristic was chosen, and mapped it to a space where, informally, the more similar the policy executions are, the closer the instances are to each other.
Figure \ref{fig:h_domain} shows the similarity of the policy executions for each instance of the different domains. 
It can be observed that the overall heuristic selection of the learned policies is similar domain-wise, given the resulting clustering. 
Figure \ref{fig:h_switching} additionally visualizes how often a heuristic was switched, i.e., the frequency of switching the heuristic between subsequent expansion steps. 
Interestingly, in the majority of the instances, the switching frequency is high (sequences with no $h$ switch $< 100$ steps), while in the instances corresponding, for example, to the domain \name{sokoban}, the switching frequency is low (sequences with no $h$ switch $> 1\,000$ steps). 
This correlates with the fact that in this domain a single heuristic was selected with a high percentage.

We conclude that our approach is able to learn highly dynamic policies, but also highly static policies, depending on the problem instance at hand. 
Intuitively, this makes sense, since there may be domains for which a static policy performs best. 
However, \name{dac} is a generalization of algorithm selection which allows to learn such static policies as well.

\paragraph{Training performance.} We compare the performance of our \name{rl} approach on the \emph{training} set (Table \ref{tab:real_metrics_train}) with the performance of \name{rl} on the test set (Table \ref{tab:real_metrics_test}). Interestingly, \name{rl} performs better on the \emph{training} set which can be attributed to a certain degree of overfitting and can explain why in some instances the performance of \name{rl} is worse than other approaches on the \emph{test} set. 
This issue can be addressed by tuning the hyperparameters, expanding the training set or adding further state features.
Overall, the improvements DAC yields over the other methods more than outweigh any overfitting, leading to DAC performing best on the test set.

\section{Conclusion}
We investigated the use of dynamic algorithm configuration for planning. 
More specifically, we have shown that dynamic algorithm configuration can be used for dynamic heuristic selection that takes into account the internal search dynamics of a planning system. 
Dynamic policies for heuristic selection generalize policies of existing approaches like algorithm selection and adaptive algorithm control, and they can improve search performance exponentially. 
We presented an approach based on dynamic algorithm configuration and showed empirically that it is possible to learn policies capable of outperforming other approaches in terms of coverage.

In future work, we will investigate domain-specific state features to learn domain-independent dynamic policies. 
Further, it is possible to dynamically control several parameters of a planner and to switch dynamically between different search algorithms.
This raises the question how the search progress \cite{aine-likhackev-icaps2016} can be shared when using different search strategies. 
In particular, if we want to combine different search 
techniques, such as heuristic search \cite{bonet-geffner-aij2001}, symbolic search \cite{torralba-et-al-aij2017,speck-et-al-icaps2018} and planning as satisfiability \cite{kautz-selman-ecai1992,rintanen-aij2012}, it is an open question how to share the search progress.

\section{Acknowledgments}
D. Speck was supported by the German Research Foundation
(DFG) as part of the project EPSDAC (MA 7790/1-1). M. Lindauer acknowledges support by the DFG under LI 2801/4-1. A. Biedenkapp, M. Lindauer and F. Hutter acknowledge funding by the Robert Bosch GmbH.

\setcounter{secnumdepth}{1} 
\appendix
\renewcommand{\thetable}{\Alph{section}\arabic{table}}
\renewcommand{\thefigure}{\Alph{section}\arabic{figure}}
\setcounter{figure}{0}
\setcounter{theorem}{3}
\section{Theoretical Results}\label{app:theoretical_results}
\begin{theorem}\label{thm:exp_expansion_number_as_2}
For each algorithm selection policy $\ctrlpol_{\text{as}}$ there exists a family of planning instances $\ptask_n$, a collection of heuristics $\heus$ and a dynamic control policy $\ctrlpol_{\text{dac}}$, so that \gbf{} with $\heus$ and $\ctrlpol_{\text{as}}$ expands exponentially more states in $|\ptask_n|$ than \gbf{} with $\heus$ and $\ctrlpol_{\text{dac}}$  until a plan $\plan$ is found.
\end{theorem}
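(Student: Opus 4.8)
The plan is to follow the blueprint of the proof of Theorem~\ref{thm:exp_expansion_number_aac}, but to strengthen the constructed instance family so that it defeats \emph{every} fixed heuristic rather than merely a fixed time-indexed sequence. The essential new difficulty is that an algorithm selection policy $\ctrlpol_{\text{as}}$ commits to a single heuristic $h = \ctrlpol_{\text{as}}(\ptask_n) \in \heus$ for the entire run; in the family used for Theorem~\ref{thm:exp_expansion_number_aac} the heuristic $h_1$ alone already solves $\ptask_n$ in two expansions, so that family cannot separate $\ctrlpol_{\text{as}}$ from $\ctrlpol_{\text{dac}}$. I would therefore build an instance family in which \emph{no} single heuristic is good, while a heuristic that switches once is.

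First I would construct $\ptask_n$ with a short goal path $s_0 \to s_1 \to s_2 \to \goal$ that contains two consecutive ``decision points''. At $s_0$ the state $s_1$ has a sibling $a$, and at $s_1$ the state $s_2$ has a sibling $b$; from $a$ and from $b$ two disjoint exponential ``trap'' regions of size $\Omega(2^n)$ each are reachable, realised with $O(n)$ binary counter variables and $O(n)$ operators exactly as in the blow-up gadget underlying Figure~\ref{fig:proof_task}. Thus $|\ptask_n| = O(n)$ while $\Omega(2^{|\ptask_n|})$ states are reachable. I would then define $\heus = \{h_0, h_1\}$ by assigning values directly on the induced transition system (which is legitimate, since a heuristic is any map $\states \to \mathbb{N}_0 \cup \{\infty\}$): $h_0$ ranks $s_1$ below $a$ but ranks $b$ and the whole trap $B$ below $s_2$, whereas $h_1$ ranks $a$ and the whole trap $A$ below $s_1$ but ranks $s_2$ below $b$; the goal $\goal$ receives value $0$ under both heuristics and is generated only after the second decision point has been passed.

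Next I would verify the two single-heuristic runs. With $h_0$ alone, \gbf{} correctly prefers $s_1$ over $a$, then prefers $b$ over $s_2$, enters trap $B$, and---because every trap-$B$ state is ranked below $s_2$---expands all $\Omega(2^n)$ of them before $s_2$ and hence before $\goal$. With $h_1$ alone, \gbf{} immediately prefers $a$ over $s_1$, enters trap $A$, and symmetrically expands $\Omega(2^n)$ states before reaching $\goal$. I would then exhibit one concrete state-based dynamic policy---for instance the overall-minimum-value rule, or the minimum-average rule $\argmin_{h \in \heus} \mu_h$ already used in Theorem~\ref{thm:exp_expansion_number_aac}---and choose the heuristic values so that this rule makes exactly the intended single switch: it selects $h_0$ to expand $s_1$ at the first decision point, switches to $h_1$ to expand $s_2$ at the second, and then finds the value-$0$ goal, touching neither trap (the freedom of $\ctrlpol_{\text{dac}}$ to depend on state and time guarantees that such a one-switch policy exists in any case). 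Since $\ctrlpol_{\text{as}}$ must pick one of $h_0, h_1$ and both are exponential while $\ctrlpol_{\text{dac}}$ uses a constant number of expansions, the exponential separation follows for the whole family $\ptask_n$.

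The main obstacle is the joint consistency of the heuristic values: I must make each single heuristic globally---not merely locally at its decision point---commit to its trap before the goal, which forces the trap states to sit below the still-unexpanded goal-path states, while simultaneously ensuring that the lingering sibling entrances $a$ and $b$ left in the two open lists never divert the dynamic policy from the intended switch. I expect this bookkeeping---choosing small integer values that satisfy all these orderings at once and checking that the goal's value $0$ guarantees it is expanded immediately upon generation---to be the only delicate part; the transition-system construction and the exponential blow-up itself are routine given the gadget already used for Theorem~\ref{thm:exp_expansion_number_aac}.
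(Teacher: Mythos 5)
Your proposal is correct and takes essentially the same approach as the paper: construct a task family on which each of the two heuristics, used alone, is drawn into an exponential trap, while a control policy that switches heuristics once reaches the goal after constantly many expansions. The paper's gadget differs only superficially---it inserts one extra state $s'$ with crossed values ($h_0(s')=2$, $h_1(s')=10$) into the family of Theorem~\ref{thm:exp_expansion_number_aac}, so that both heuristics are lured into the \emph{same} trap at different stages, whereas you build two disjoint traps at two decision points; your deferred value bookkeeping is routine, and, as you yourself note, the required switch can even be realized by a time-indexed policy, which is a valid DAC policy.
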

\begin{proof}
Let $\ctrlpol_{\text{as}}$ be an algorithm selection policy. 
We consider the family of planning tasks $\ptask'_n$, which is similar to the family of planning tasks $\ptask_n$ (Figure \ref{fig:proof_task} in the main paper), with one modification: the goal state $s_2$ is not directly reachable from $s_1$, but via an additional state $s'$. 
In other words, we insert the state $s'$ between $s_1$ and $s_2$. 
Furthermore, we again consider a collection of two heuristics $\heus = \{h_0,h_1\}$ with the heuristic estimates shown in Figure \ref{fig:proof_task} (main paper) and $h_0(s')=2$ and $h_1(s')=10$.
The idea is that both heuristics alone lead to the expansion of exponentially many states, whereas a dynamic switch of the heuristic only leads to constantly many expansions.

Policy $\ctrlpol_{\text{as}}$ selects exactly one heuristic, $h_0$ or $h_1$, for each planning task.
If $h_0$ is selected, with the same argument used in the proof of Theorem \ref{thm:exp_expansion_number_aac} (main paper), exponentially many states in $|\ptask'_n|$ are expanded. 
If $h_1$ is selected, in time step 2, states $s_3$ and $s'$ are contained in both open lists. 
According to $h_1$, state $s_3$ is more promising than $s'$, which leads again to an expansion of exponentially many states in $|\ptask'_n|$. 

In comparison, for $\ctrlpol_{\text{dac}}$ we pick again the policy that always selects the heuristic with minimum average heuristic value of all states in the corresponding open list, i.e. $\argmin_{h \in \heus} \mu_h$. 
Policy $\ctrlpol_{\text{dac}}$ selects first $h_0$, followed by $h_1$ and again $h_0$, resulting in the generation of the goal state after three state extensions. 
\end{proof}

\section{White-Box Experiments}\label{app:white}
We conducted preliminary experiments on a newly created \name{artificial} domain with two artificial heuristics. 
This domain is designed so that in each step, only one of two heuristics is informative. 
In other words, similar to the constructed example in the proof of Theorem \ref{thm:exp_expansion_number_as}, at each time step, only one heuristic leads to the expansion of a state which is on the shortest path to a goal state.
In order to obtain a good control policy that leads to few state expansions, it is necessary to derive a \emph{dynamic} control policy from the state features.
We generated $30$ training instances on which we performed a small grid search over the following parameters \emph{\#layers} $\in \left\{2,5\right\}$, \emph{hidden units} $\in \left\{50, 75, 150, 200\right\}$ and \emph{epsilon decay} $\in \left\{2.5 \times 10^5, 5 \times 10^5\right\}$.
We determined that a $2$-layer network with $75$ hidden units and a linear decay for $\epsilon$ over $5 \times 10^5$ steps from $1$ to $0.1$ worked best \footnote{
Note that the hyperparameters for experiments on the IPC domains have not been further tuned.}.

Interestingly, it was possible to learn policies with a performance close to the optimal policy, see Figure~\ref{fig:toy-performance}. Both individual heuristics perform poorly (even when using an oracle selector). Randomly deciding which heuristic to play performs nearly as good as the alternating strategy that alternates between the heuristics at each step. In the beginning the learned policy needs some time to figure out in which states a heuristic might be preferable. However, it quickly learns to choose the correct heuristic, outperforming all other methods and nearly recovering the optimal policy.

\begin{figure}
    \centering
    \includegraphics[width=.43\textwidth]{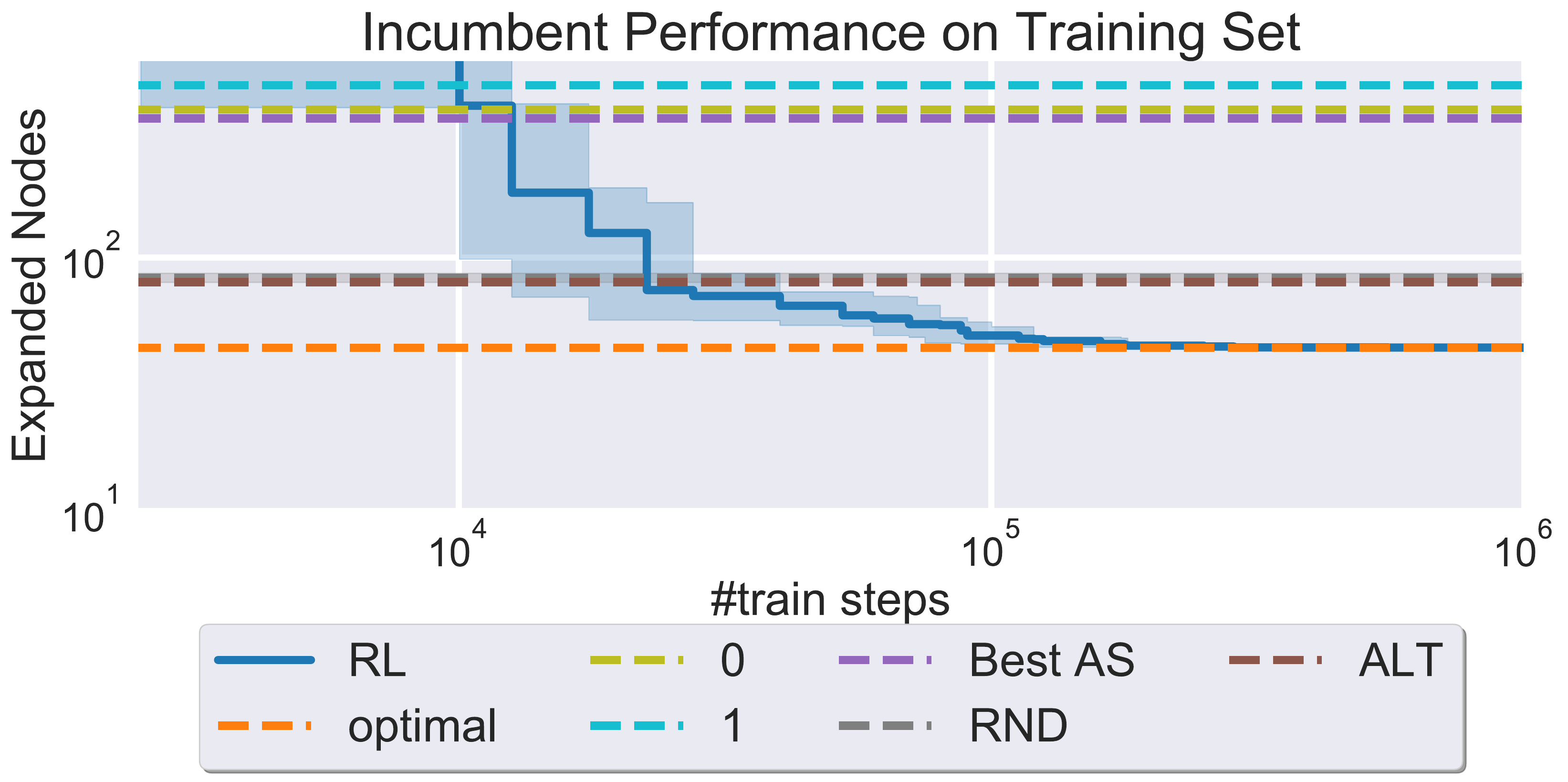}
    \caption{Performance of the best learned policy during training (\name{rl}), compared to the the performance of the individual heuristics (\name{0} \& \name{1}), the oracle selector (\name{best as}), an alternating schedule (\name{alt}), a random policy (\name{rnd}) and the optimal policy. Dashed lines indicate the performance of our baselines, the solid line the mean performance and the shaded area the standard deviation of our approach.
    }
    \label{fig:toy-performance}
\end{figure}

\section{Domain Dependence}
We limit ourselves to learning domain-dependent policies as it was uncertain if learned policies would be able to transfer from training with a cutoff to the test set with potentially much longer trajectories.
In particular, we used a conservative cutoff of $7500$ node expansions for training.
When evaluating our fully trained policies, we did not use this node expansion cutoff, but ran \name{Fast Downward} for $300$ seconds (allowing for potentially many more than $7500$ node expansions).
To gain insights into how many of the training and testing problems could be solved by our learned policies without use of the conservative cutoff we evaluated all our learned policies with the cutoff of $300$ seconds.
On the \textit{training} set roughly $31\%$ of all training instances that can be solved within $300$ seconds by our policies require more than $7500$ node expansions.
On the test set it increases slightly to $33\%$ of the solved instances require more than $7500$ expansions.
It is worth noting that on the test sets for \name{barman} and \name{visitall} this percentage is even $49\%$ and $66\%$ respectively.
As this generalization to unseen instances with potentially much longer trajectories was a first significant hurdle to overcome we limited ourselves to domain-dependent policies.
In future work we will remove this limitation.

\section{Additional Experiments}\label{appendix:sec:experimentes}

\begin{table*}[t]
    \centering
    \resizebox{0.9\textwidth}{!}{
        \begin{tabular}{lrrrrrrrr |rrc}
    \toprule
     Algorithm & \multicolumn{3}{c}{\name{control policy}} & \multicolumn{5}{c}{\name{single heuristic}} & \multicolumn{3}{|c}{\name{best as (oracle)}} \\
     \cmidrule(lr){2-4} \cmidrule(lr){5-9} \cmidrule(lr){10-12} 
     Domain (\# Inst.) & \multicolumn{1}{c}{\textbf{\name{rl}}} & \multicolumn{1}{c}{\name{rnd}} & \multicolumn{1}{c}{\name{alt}} & \multicolumn{1}{c}{$h_{\text{ff}}$} & \multicolumn{1}{c}{$h_{\text{cg}}$} & \multicolumn{1}{c}{$h_{\text{cea}}$} & \multicolumn{1}{c}{$h_{\text{add}}$} & \multicolumn{1}{c}{$h_{\text{lm-count}}$} &
     \multicolumn{1}{|c}{\textbf{\name{rl}}} &
     \multicolumn{1}{c}{\name{alt}} &
     \multicolumn{1}{c}{\name{single $h$}} \\
     \cmidrule(lr){1-1} \cmidrule(lr){2-9} \cmidrule(lr){10-12}
     \rowcolor{Gray}
     \name{barman} (100) & \textbf{85.6} & 84.2 & 84.1 & 64.0 & 16.0 & 18.0 & 18.0 & 77.0 & \textbf{91.2} & 85.0 & 85.0  \\
     \name{blocksworld} (100) & \textbf{95.0} & 89.8 & 90.8 & 75.0 & 60.0 & 92.0 & 92.0 & 75.0 & \textbf{99.0} & 93.0 & 97.0 \\
     \rowcolor{Gray}
     \name{childsnack} (100) & 84.4 & 82.0 & 83.4 & 75.0 & \textbf{86.0} & \textbf{86.0} & \textbf{86.0} & 68.0 & \textbf{89.0} & 85.0 & 86.0 \\
     \name{rovers} (100) & \textbf{100.0} & \textbf{100.0} & \textbf{100.0} & 83.0 & 72.0 & 67.0 & 67.0 & \textbf{100.0} & \textbf{100.0} & \textbf{100.0} & \textbf{100.0} \\
     \rowcolor{Gray}
     \name{sokoban} (100) & 87.0 & 87.1 & 87.0 & 88.0 & \textbf{90.0} & 60.0 & 88.0 & 85.0 & 87.0 & 87.0 & \textbf{92.0} \\
     \name{visitall} (100) & 1\textbf{00.0} & \textbf{100.0} & \textbf{100.0} & 38.0 & 60.0 & 59.0 & 60.0 & \textbf{100.00} & \textbf{100.0} & \textbf{100.0} & \textbf{100.0} \\
     \cmidrule(lr){1-1} \cmidrule(lr){2-9} \cmidrule(lr){10-12}
     \textsc{sum} (600) & \textbf{552.0
} & 543.1 & 545.3 & 423.0 & 384.0 & 382.0 & 411.0 & 505.0 & \textbf{566.2} & 550.0 & 560.0 \\
     \bottomrule
\end{tabular}
 
    }
    \caption{Average coverage of different policies for the selection of a heuristic in each expansion step when evaluating the strategies on the prior unseen \emph{test} set. 
    The first three columns are control policies, the next \emph{five} are individual heuristic searches, while the last three represent the best algorithm selection of the corresponding strategies, i.e., oracle selector for each instance.}
    \label{tab:real_coverage_h5}
\end{table*}
We conducted additional experiments with 5 heuristics instead of 4, also including the $h_{\text{lm-count}}$ heuristic. Overall, the results are similar to those presented in the paper, showing that the learned \name{dac} policy performs best overall. Note that our computer cluster has been updated to Ubuntu 20.04 LTS 64 bit in the meantime. Therefore, we ran all the configurations again, including the static ones, which led to slightly different results here than in the main experiments.

\section{Additional Analysis}\label{appendix:sec:analysis}
\begin{table*}[tbp]
    \centering
    \resizebox{0.85\textwidth}{!}{
        \begin{tabular}{l rrr rrr rrr rrr}
    \toprule
     {} & \multicolumn{3}{c}{$h_{\text{ff}}$} & \multicolumn{3}{c}{$h_{\text{cg}}$} & \multicolumn{3}{c}{$h_{\text{cea}}$} & \multicolumn{3}{c}{$h_{\text{add}}$}\\
     \cmidrule(lr){2-4} \cmidrule(lr){5-7} \cmidrule(lr){8-10} \cmidrule(lr){11-13}
     Domain (\# Solved Inst.) &  \multicolumn{1}{c}{Avg.}  &  \multicolumn{1}{c}{Max} &  \multicolumn{1}{c}{Min} &  \multicolumn{1}{c}{Avg.}  &  \multicolumn{1}{c}{Max} &  \multicolumn{1}{c}{Min} &  \multicolumn{1}{c}{Avg.}  &  \multicolumn{1}{c}{Max} &  \multicolumn{1}{c}{Min}  &  \multicolumn{1}{c}{Avg.}  &  \multicolumn{1}{c}{Max} &  \multicolumn{1}{c}{Min} \\
     \midrule
     \rowcolor{Gray}
     \name{barman} (84.4) & \textbf{0.60} & 0.87 & 0.21  
                          & 0.38 & 0.77 & 0.13  
                          & 0.02 & 0.05 & 0.00  
                          & 0.00 & 0.01 & 0.00\\
     \name{blocksworld} (92.9) & 0.02 & 0.10 & 0.00
                               & 0.00 & 0.02 & 0.00
                               & 0.38 & 0.50 & 0.00
                               & \textbf{0.60} & 1.00 & 0.47\\
     \rowcolor{Gray}
     \name{childsnack} (88.) & \textbf{0.95} & 1.00 & 0.62
                             & 0.02 & 0.20 & 0.00
                             & 0.01 & 0.04 & 0.00
                             & 0.02 & 0.16 & 0.00\\
     \name{rovers} (95.2) & \textbf{0.68} & 0.92 & 0.56
                          & 0.05 & 0.15 & 0.00
                          & 0.18 & 0.26 & 0.04
                          & 0.09 & 0.13 & 0.04\\
     \rowcolor{Gray}
     \name{sokoban} (87.7) & \textbf{0.98} & 1.00 & 0.80
                           & 0.00 & 0.20 & 0.00
                           & 0.02 & 0.03 & 0.00
                           & 0.00 & 0.02 & 0.00\\
     \name{visitall} (56.9) & 0.02 & 0.20 & 0.00
                            & 0.18 & 0.33 & 0.00
                            & \textbf{0.43} & 1.00 & 0.04
                            & 0.37 & 0.60 & 0.00\\
     \bottomrule
\end{tabular}
    }
    \caption{Overall heuristic selection by \name{rl} per expansion step, averaged over solved instances. We highlight in bold the most frequently selected heuristic for expansion per domain on average.}
    \label{tab:heuristic_usage}
\end{table*}
\begin{table*}[h!tb]
    \centering
    \resizebox{1\textwidth}{!}{
        \begin{tabular}{l rrrr rrrr rrrr rrrr}
    \toprule
     {} & \multicolumn{4}{c}{Q1} & \multicolumn{4}{c}{Q2} & \multicolumn{4}{c}{Q3} & \multicolumn{4}{c}{Q4}\\
     \cmidrule(lr){2-5} \cmidrule(lr){6-9} \cmidrule(lr){10-13} \cmidrule(lr){14-17}
     Domain (\# Solved Inst.) & 
     \multicolumn{1}{c}{$h_{\text{ff}}$}  &  \multicolumn{1}{c}{$h_{\text{cg}}$} &  \multicolumn{1}{c}{$h_{\text{cea}}$} &  \multicolumn{1}{c}{$h_{\text{add}}$} &  \multicolumn{1}{c}{$h_{\text{ff}}$}  &  \multicolumn{1}{c}{$h_{\text{cg}}$} &  \multicolumn{1}{c}{$h_{\text{cea}}$} &  \multicolumn{1}{c}{$h_{\text{add}}$} &  \multicolumn{1}{c}{$h_{\text{ff}}$}  &  \multicolumn{1}{c}{$h_{\text{cg}}$} &  \multicolumn{1}{c}{$h_{\text{cea}}$} &  \multicolumn{1}{c}{$h_{\text{add}}$} &  \multicolumn{1}{c}{$h_{\text{ff}}$}  &  \multicolumn{1}{c}{$h_{\text{cg}}$} &  \multicolumn{1}{c}{$h_{\text{cea}}$} &  \multicolumn{1}{c}{$h_{\text{add}}$}\\
     \midrule
     \rowcolor{Gray}
     \name{barman} (84.4) & \textbf{0.56} & 0.41 & 0.03 & 0.00 
                         & \textbf{0.61} & 0.37 & 0.02 & 0.00 
                         & \textbf{0.62} & 0.36 & 0.02 & 0.00 
                         & \textbf{0.62} & 0.36 & 0.02 & 0.00\\
     \rowcolor{Gray}
     \name{blocksworld} (92.9) & 0.07 & 0.01 & 0.36 & \textbf{0.56}
                              & 0.01 & 0.00 & 0.38 & \textbf{0.60}
                              & 0.01 & 0.00 & 0.39 & \textbf{0.61}
                              & 0.00 & 0.00 & 0.39 & \textbf{0.61}\\
     \name{childsnack} (88.) & \textbf{0.91} & 0.03 & 0.02 & 0.04
                             & \textbf{0.95} & 0.02 & 0.00 & 0.03
                             & \textbf{0.97} & 0.01 & 0.00 & 0.02
                             & \textbf{0.97} & 0.01 & 0.01 & 0.01\\
     \name{rovers} (95.2) & \textbf{0.71} & 0.03 & 0.16 & 0.10
                         & \textbf{0.68} & 0.05 & 0.18 & 0.09
                         & \textbf{0.67} & 0.05 & 0.19 & 0.09
                         & \textbf{0.68} & 0.05 & 0.18 & 0.09\\
     \rowcolor{Gray}
     \name{sokoban} (87.7) & \textbf{0.97} & 0.00 & 0.02 & 0.01
                          & \textbf{0.98} & 0.00 & 0.01 & 0.01
                          & \textbf{0.98} & 0.00 & 0.01 & 0.01
                          & \textbf{0.99} & 0.00 & 0.01 & 0.00\\
     \rowcolor{Gray}
     \name{visitall} (56.9) & 0.05 & 0.18 & \textbf{0.43} & 0.34
                           & 0.01 & 0.19 & \textbf{0.42} & 0.38
                           & 0.01 & 0.18 & \textbf{0.43} & 0.37
                           & 0.02 & 0.18 & \textbf{0.44} & 0.36\\
     \bottomrule
\end{tabular}
    }
    \caption{Average heuristic usage over all solved test problem instances split into quarters of the learned policies. Bold-faced entries are the most frequently used heuristic per quarter.}
    \label{appendix:tab:quarters}
\end{table*}
\begin{table*}[h!tb]
    \centering
    \resizebox{0.85\textwidth}{!}{
        \begin{tabular}{l rrr rrr rrr rrr}
    \toprule
     {} & \multicolumn{3}{c}{Immediate} & \multicolumn{3}{c}{High Freq.} & \multicolumn{3}{c}{Medium Freq.} & \multicolumn{3}{c}{Low Freq.}\\
     \cmidrule(lr){2-4} \cmidrule(lr){5-7} \cmidrule(lr){8-10} \cmidrule(lr){11-13}
     Domain (\# Solved Inst.) &  \multicolumn{1}{c}{Avg.}  &  \multicolumn{1}{c}{Max} &  \multicolumn{1}{c}{Min} &  \multicolumn{1}{c}{Avg.}  &  \multicolumn{1}{c}{Max} &  \multicolumn{1}{c}{Min} &  \multicolumn{1}{c}{Avg.}  &  \multicolumn{1}{c}{Max} &  \multicolumn{1}{c}{Min}  &  \multicolumn{1}{c}{Avg.}  &  \multicolumn{1}{c}{Max} &  \multicolumn{1}{c}{Min} \\
     \midrule
     \rowcolor{Gray}
     \name{barman} (84.4) & 0.28 & 0.51 & 0.11  
                          & \textbf{0.57} & 0.76 & 0.26  
                          & 0.10 & 0.23 & 0.09  
                          & 0.05 & 0.49 & 0.00\\
     \name{blocksworld} (92.9) & 0.14 & 0.56 & 0.00
                               & \textbf{0.40} & 0.95 & 0.04
                               & 0.28 & 0.67 & 0.00
                               & 0.18 & 0.85 & 0.00\\
     \rowcolor{Gray}
     \name{childsnack} (88.) & 0.08 & 0.31 & 0.00
                             & 0.33 & 0.95 & 0.00
                             & \textbf{0.41} & 0.99 & 0.00
                             & 0.18 & 0.99 & 0.00\\
     \name{rovers} (95.2) & 0.28 & 0.54 & 0.00
                          & \textbf{0.60} & 0.86 & 0.01
                          & 0.06 & 0.55 & 0.00
                          & 0.06 & 0.98 & 0.00\\
     \name{sokoban} (87.7) & 0.04 & 0.24 & 0.00
                           & \textbf{0.50} & 0.98 & 0.01
                           & 0.33 & 0.98 & 0.00
                           & 0.13 & 0.80 & 0.00\\
     \rowcolor{Gray}
     \name{visitall} (56.9) & 0.07 & 0.30 & 0.01
                            & 0.24 & 0.85 & 0.03
                            & 0.28 & 0.66 & 0.00
                            & \textbf{0.41} & 0.89 & 0.00\\
     \bottomrule
\end{tabular}
    }
    \caption{Switching frequency of the learned policies on the test dataset. Bold-faced entries give the, on average, most used switching frequency per domain.}
    \label{appendix:tab:switch_freq}
\end{table*}
Tables~\ref{appendix:tab:quarters} and \ref{tab:heuristic_usage} shows the average heuristic of a run.
Every section of the table refers to one quarter of a full policy trajectory in which a heuristic is selected at every step.
We can observe that the learned policies tend to favor one of the heuristics over all others.
For example in \name{barman}, on average the learned policies select the $h_{\text{ff}}$ heuristic more than $50\%$ of a successful run, whereas it selects $h_{\text{cg}}$ roughly for one third of a successful run.
Further, we can observe that in most domains the learned policies tend to focus on two out of the four available heuristics.
Only in \name{childsnack} and \name{sokoban} do we observe that a single heuristic is chosen throughout the entire run.

By splitting up the observed trajectories in quarters and analyzing these quarters individually, we can see that the learned heuristics tend to slightly increase the usage of the dominant heuristic over time.
In \name{blocksworld} for example, in the first quarter $h_{\text{add}}$ is selected roughly $56\%$ on average and increases to $60\%$ in the second quarter, before leveling out at $61\%$ in the third and fourth quarters.
Only on \name{rovers} we observe an inverse of this trend where $h_{\text{ff}}$ is selected $71\%$ in the first quarter before dropping to $68\%$ and $67\%$ in the third and fourth quarter respectively.
Sometimes this increase in usage of the more dominant heuristic is reflected with an appropriate reduction in usage of the second dominant heuristic, (see e.g. \name{barman}).
In other cases however, both dominant heuristics increase in usage while reducing the usage of the mostly unused heuristics (see e.g. \name{blocksworld}).

Table~\ref{appendix:tab:switch_freq} shows the average switching frequency observed on the test data of the individual IPC domains.
For this analysis, we recorded the length of heuristic usage before switching to another heuristic.
For ease of analysis, we differentiate between the four different frequency classes:
\begin{itemize}
    \item Immediate $\Rightarrow$ switched heuristics already after one step;
    \item High $\Rightarrow$ switched after $2$ to $100$ steps;
    \item Medium $\Rightarrow$ switched after $101$ to $1000$ steps;
    \item Low $\Rightarrow$ switched after more than $1000$ steps.
\end{itemize}
Here we report the average, maximal and minimal switching frequencies observed on the solved instances of the individual domains.
This allows us to gain insights into how often switching was necessary.
On \name{barman} we can for example observe that on average, $28\%$ of the performed steps in this domain were immediate switches between heuristics.
Further, we can also see that on every problem instance, some of the performed steps were immediate switches between heuristics (see column ``Min'' under ``Immediate'').
This is reflected accordingly in the maximum values of any of the frequency classes as none is ever higher than $60\%$.
For all other domains however, we can see that a successful did commit to longer trajectories for at least one test problem instance.

For some domains, such as \name{sokoban} or \name{childsnack} we can see that at least one instance in the domain was solved by a policy, by consistently playing only one of the possible heuristics.
This is consistent with the reported coverage and heuristic usage values in Tables \ref{tab:real_coverage} of the main paper, as well Table~\ref{appendix:tab:quarters} of this section.
Lastly, we can observe that the lowest frequency class is observed least frequently.
This indicates that the our agents did mostly commit to such long repeated actions if they learned that a problem instance is solved best by not switching.

\end{document}